\documentclass{article} 
\usepackage{iclr2026_conference,times}

\usepackage{amsmath,amsfonts,bm}

\def\eqref#1{equation~\ref{#1}}

\def\1{\bm{1}}

\def\vv{{\bm{v}}}

\DeclareMathAlphabet{\mathsfit}{\encodingdefault}{\sfdefault}{m}{sl}
\SetMathAlphabet{\mathsfit}{bold}{\encodingdefault}{\sfdefault}{bx}{n}

\newcommand{\E}{\mathbb{E}}

\newcommand{\R}{\mathbb{R}}

\DeclareMathOperator*{\argmin}{arg\,min}

\usepackage{hyperref}
\usepackage{url}
\usepackage{amsmath,amssymb,amsthm}
\usepackage{booktabs}
\usepackage{graphicx}
\usepackage{xcolor}
\usepackage{enumitem}
\usepackage{pifont}
\usepackage{algorithm}
\usepackage{algorithmic}

\newtheorem{theorem}{Theorem}
\newtheorem{proposition}{Proposition}
\newtheorem{lemma}{Lemma}
\newtheorem{assumption}{Assumption}
\providecommand{\x}{\mathbf{x}}
\providecommand{\y}{\mathbf{y}}
\providecommand{\z}{\mathbf{z}}
\providecommand{\uu}{\mathbf{u}}
\providecommand{\s}{\mathbf{s}}
\providecommand{\g}{\mathbf{g}}

\title{What Do Flow-Based Inverse Solvers Approximate?\\ A Posterior-Transport View}

\iclrfinalcopy

\author{Jian Xu$^{1,2}$, Delu Zeng$^{3}$, John Paisley$^{4}$, Qibin Zhao$^{2}$ \\
$^{1}$RIKEN iTHEMS \quad $^{2}$RIKEN AIP \quad $^{3}$South China University of Technology \quad $^{4}$Columbia University \\
\texttt{jian.xu@riken.jp}}

\begin{document}

\maketitle
\thispagestyle{fancy}\lhead{Preprint}\fancyhead[L]{Preprint}

\begin{abstract}
A growing family of training-free solvers---FlowDPS, FLOWER, PnP-Flow and their
diffusion ancestors (DPS, DAPS)---repurpose a pretrained flow-matching prior to
solve imaging inverse problems by adding a measurement-guidance term to the
deterministic probability-flow ODE. Despite strong empirical results, what these
per-step corrections actually approximate---and how far the resulting samples are
from the true posterior $p(\x\mid\y)$---has not been characterized. We give a
posterior-transport account of flow-based inverse problem solving. Our starting
point is a simple but consequential fact: for a \emph{deterministic} flow prior,
Bayesian conditioning is realized entirely by a \emph{reweighting of the source
distribution}, not by a drift correction; pushing the reweighted source through
the \emph{unmodified} velocity field yields exact posterior samples. From this we
show that trajectory-guidance solvers can be read as the minimum-kinetic-energy
\emph{correction} field needed to morph the unconditional source into the
posterior, and that FlowDPS / FLOWER / PnP-Flow correspond to distinct
zeroth-order / Gaussian / proximal approximations of this single object; we bound
the resulting posterior bias in Wasserstein distance. A controlled $2$D study with
a closed-form posterior confirms the theory decisively: source reweighting matches
the true posterior to the Monte-Carlo floor on every metric, whereas trajectory
guidance incurs $200$--$800\times$ larger error and collapses posterior modes,
\emph{regardless of guidance strength}. Guided by the analysis we propose a cheap,
principled velocity-correction solver that is competitive across two in-domain priors (AFHQ, CelebA) and two out-of-distribution
settings while, unlike point-estimate source-space optimizers,
producing diverse posterior samples with uncertainty that correlates with reconstruction error.
\end{abstract}

\section{Introduction}

Flow matching \citep{lipmanflow,liuflow,albergo2022building} has
become a state-of-the-art framework for generative modeling, learning a velocity
field $\vv_t$ whose probability-flow ODE transports a simple source $p_0$ to a
data distribution $p_1$. A pretrained flow model is an implicit prior over images,
and a large recent literature reuses it---without retraining---to solve linear and
nonlinear inverse problems such as super-resolution, deblurring and inpainting
\citep{kim2025flowdps,pourya2026flower,martin2025pnp,ben2024d}. These
solvers all share a template inherited from diffusion posterior sampling
\citep{chung2022diffusion,zhang2025improving}: integrate the prior flow ODE and, at each
step, nudge the trajectory toward the measurement $\y$ with a likelihood-based
correction.

This template works well empirically, but it is built bottom-up by analogy to
diffusion guidance, and a basic question has gone unanswered: \emph{what does the
per-step correction approximate, and how biased is the resulting sampler relative
to the true posterior $p(\x\mid\y)$?} Perceptual metrics (LPIPS, FID) reward
producing \emph{a} plausible reconstruction, and so they mask the distinction
between sampling the posterior and merely landing on its typical set. For
reliability-critical restoration---and for any use of these solvers to quantify
uncertainty---the distinction matters.

We take a top-down, posterior-transport view. The key observation is that the
\emph{deterministic} nature of the probability-flow ODE changes the structure of
Bayesian conditioning relative to the stochastic (diffusion) case. For an SDE
prior, conditioning on $\y$ adds a Doob $h$-transform drift to the dynamics
\citep{denker2024deft}. For a deterministic flow, we show there is \emph{no
canonical added drift}: the exact posterior is obtained by leaving the velocity
field untouched and \emph{reweighting the source distribution} by the endpoint
likelihood (Section~\ref{sec:exact}, Proposition~\ref{prop:dichotomy}). Conditioning lives
in the source, not in the trajectory.

This reframing has three consequences. (i) It explains what trajectory-guidance
solvers are doing: because sampling the reweighted source is intractable, they
start from the \emph{unconditional} source and inject a correction field to morph
it into the posterior; we identify the canonical (minimum-kinetic-energy)
correction and show FlowDPS, FLOWER and PnP-Flow are distinct approximations of it
(Proposition~\ref{thm:approx}). (ii) It yields a posterior-bias bound in Wasserstein
distance controlled by the correction-estimation error (Theorem~\ref{thm:bound}).
(iii) It tells us where existing solvers must fail---multimodal posteriors, where
a local correction collapses to a single mode---and what a faithful solver should
instead do.

We validate the theory on a $2$D problem with a closed-form Gaussian-mixture
posterior, removing all model error: source reweighting reaches the Monte-Carlo
floor on sliced-Wasserstein, energy distance, MMD and mode-weight recovery, while
trajectory guidance is $200$--$800\times$ worse and collapses modes for
\emph{every} guidance strength (Section~\ref{sec:toy}). On AFHQ, CelebA, and
out-of-distribution image restoration we reproduce the standard benchmark and, guided by the analysis, give a cheap
velocity-correction solver (no ODE backpropagation) that is competitive on quality
while producing diverse posterior samples with error-correlated uncertainty that point-estimate
source-space optimizers (e.g.\ D-Flow) cannot (Section~\ref{sec:imaging}).

\paragraph{Contributions.}
\begin{itemize}[leftmargin=1.5em,itemsep=0.1em]
\item \textbf{Exact posterior transport for flow priors.} We prove that, for a
deterministic flow prior, the posterior is the pushforward of a reweighted source
under the \emph{unmodified} velocity field---conditioning is source reweighting,
not drift (Proposition~\ref{prop:invariant}).
\item \textbf{A unifying view of flow inverse solvers.} We characterize the
canonical correction field that trajectory-guidance solvers approximate and place
FlowDPS, FLOWER, PnP-Flow and DPS within it as zeroth-order / Gaussian / proximal
approximations (Proposition~\ref{thm:approx}), with a Wasserstein posterior-bias bound
(Theorem~\ref{thm:bound}).
\item \textbf{Decisive empirical confirmation and a practical solver.} A
closed-form $2$D study confirms the predictions exactly; on image restoration our
analysis-guided solver matches strong baselines in quality while delivering
error-correlated posterior uncertainty.
\end{itemize}

\section{Background and problem setup}
\label{sec:bg}

\paragraph{Flow-matching priors.} A flow-matching model learns a time-dependent
velocity field $\vv_t:\R^d\to\R^d$, $t\in[0,1]$, such that the probability-flow ODE
\begin{equation}
\frac{d\x_t}{dt} = \vv_t(\x_t), \qquad \x_0\sim p_0=\mathcal{N}(0,I),
\label{eq:pfode}
\end{equation}
transports the source $p_0$ to the data distribution $p_1\!\approx\! p_{\mathrm{data}}$.
Let $\Phi_{s\to t}$ denote the flow map of \eqref{eq:pfode} (the solution operator
from time $s$ to $t$); it is a deterministic diffeomorphism, and
$p_t=(\Phi_{0\to t})_\# p_0$. We write $\Phi:=\Phi_{0\to1}$, so $p_1=\Phi_\# p_0$.

\paragraph{Inverse problems.} We observe
$\y = \mathcal{A}(\x)+\boldsymbol{\eta}$, $\boldsymbol{\eta}\sim\mathcal{N}(0,\sigma^2 I)$,
with a known (possibly non-invertible) forward operator $\mathcal A$ and likelihood
$p(\y\mid\x)$. Using the flow model as prior, the Bayesian posterior is
$p(\x\mid\y)\propto p(\y\mid\x)\,p_1(\x)$. We want samples from $p(\x\mid\y)$, not
just a single reconstruction.

\paragraph{Trajectory-guidance solvers.} The dominant template integrates
\eqref{eq:pfode} from $\x_0\sim p_0$ while adding a measurement correction:
$\tfrac{d\x_t}{dt}=\vv_t(\x_t)+\uu_t(\x_t)$, where $\uu_t$ is built from a one-step
endpoint estimate $\hat\x_1(\x_t)=\x_t+(1-t)\vv_t(\x_t)$ (a flow Tweedie formula)
and the likelihood gradient $\nabla_{\x_t}\log p(\y\mid\hat\x_1)$. FlowDPS
\citep{kim2025flowdps} uses this gradient directly; FLOWER \citep{pourya2026flower}
replaces it with an isotropic-Gaussian endpoint-posterior projection solved by
conjugate gradients; PnP-Flow \citep{martin2025pnp} alternates a data-fidelity
step with flow-path reprojection and denoising. D-Flow \citep{ben2024d} is the
exception: it optimizes the \emph{source} latent $\z$ to fit $\y$, returning a
single MAP-like point.

\section{Exact posterior transport under a flow prior}
\label{sec:exact}

We first establish that, for a deterministic flow prior, conditioning is a source
operation. Throughout, $\Phi$ is the prior flow map and $p_0=\mathcal N(0,I)$.

\begin{lemma}[Posterior = reweighted source; classical]
\label{thm:exact}
Define the \emph{source posterior}
\begin{equation}
p_0^\y(\z) \;\propto\; p\big(\y\mid \Phi(\z)\big)\,p_0(\z).
\label{eq:source-posterior}
\end{equation}
Then $\x=\Phi(\z)$ with $\z\sim p_0^\y$ is distributed exactly as $p(\x\mid\y)$,
i.e.\ $p(\cdot\mid\y)=\Phi_\# p_0^\y$.
\end{lemma}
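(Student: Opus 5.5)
We prove the identity $p(\cdot\mid\y)=\Phi_\#p_0^\y$ by testing both measures against an arbitrary bounded measurable $f:\R^d\to\R$. This requires no densities of $\Phi$ and no Jacobians. We use only that $\Phi$ is a measurable bijection (indeed a diffeomorphism) with $p_1=\Phi_\#p_0$, and that the likelihood $p(\y\mid\cdot)$ is a nonnegative measurable function of $\x$. The posterior is the probability measure $\pi^\y(d\x)=p(\y\mid\x)\,p_1(d\x)/Z$, where $Z=\int p(\y\mid\x)\,p_1(d\x)$, and we assume $0<Z<\infty$.

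\textbf{Key step (change of variables for pushforwards).} By definition of the pushforward, $\int g\,d(\Phi_\#\mu)=\int g\circ\Phi\,d\mu$ for every nonnegative or integrable $g$. We apply this with $\mu=p_0$ and $g(\x)=f(\x)\,p(\y\mid\x)$, which gives
\begin{equation*}
\int f(\x)\,p(\y\mid\x)\,p_1(d\x)=\int f(\Phi(\z))\,p\big(\y\mid\Phi(\z)\big)\,p_0(\z)\,d\z .
\end{equation*}
Taking $f\equiv1$ shows that the normalizer of the source posterior~\eqref{eq:source-posterior}, namely $Z_0=\int p(\y\mid\Phi(\z))p_0(\z)\,d\z$, equals $Z$. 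So $p_0^\y$ is well defined exactly when the posterior is.

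\textbf{Conclusion.} Dividing the display by $Z=Z_0$ gives $\E_{\x\sim\pi^\y}[f(\x)]=\E_{\z\sim p_0^\y}[f(\Phi(\z))]=\int f\,d(\Phi_\#p_0^\y)$. Since $f$ is an arbitrary bounded measurable function, the two measures coincide. For densities one can say equivalently that the likelihood factor is a function of $\x$ alone, so reweighting before or after the deterministic map $\Phi$ is the same operation. The Jacobian factor $|\det\nabla\Phi^{-1}|$ that appears in $p_1$ cancels identically between prior and posterior.

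\textbf{Main obstacle.} The only genuinely delicate point is the measure-theoretic justification: finiteness and positivity of $Z$ (positivity holds since the Gaussian likelihood is strictly positive), and integrability of $f\cdot p(\y\mid\cdot)$. Both are immediate because $p(\y\mid\x)\le(2\pi\sigma^2)^{-m/2}$ is bounded and $f$ is bounded. Working with test functions rather than density formulas avoids any need for regularity of $\Phi$ beyond measurability.
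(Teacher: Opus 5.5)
Your proof is correct and uses the same test-function argument as the paper. The one difference is in the key step. You apply the defining identity $\int g\,d(\Phi_\#\mu)=\int g\circ\Phi\,d\mu$, whereas the paper substitutes $\x=\Phi(\z)$ and uses the Jacobian density formula $p_1(\x)=p_0(\Phi^{-1}(\x))\,|\det\nabla\Phi^{-1}(\x)|$. Your version therefore needs only measurability of $\Phi$, not a diffeomorphism or a Lebesgue density for $p_1$. It also derives the normalizer equality explicitly (by taking $f\equiv1$), where the paper simply asserts $Z=\int p(\y\mid\x)p_1(\x)\,d\x$.
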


Lemma~\ref{thm:exact} is the elementary change-of-variables identity underlying
latent-space Bayesian inference and normalizing-flow posterior inference; we claim
no novelty for it and state it only to fix notation. Our contribution is what it
implies for the \emph{dynamics}, which is specific to flow matching and, to our
knowledge, has not been made explicit. Define the time-$t$ reweighting
$r_t(\x):=p(\y\mid\Phi_{t\to1}(\x))/p(\y)$.

\begin{proposition}[Conditioning is a prior-flow invariant]
\label{prop:invariant}
The posterior marginal is the reweighted prior marginal, $p_t^\y=r_t\,p_t$, and
$r_t$ is conserved along prior-flow characteristics, $\partial_t r_t+\vv_t\!\cdot\!\nabla r_t=0$.
Consequently the posterior marginal path is transported by the \emph{unmodified}
prior velocity,
\begin{equation}
\partial_t p_t^\y + \nabla\!\cdot\!\big(p_t^\y\,\vv_t\big)=0,\qquad p_1^\y=p(\cdot\mid\y),
\label{eq:posterior-transport}
\end{equation}
i.e.\ no drift correction is required (proof in Appendix~\ref{app:proofs}).
\end{proposition}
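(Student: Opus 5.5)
We define the posterior marginal path as $p_t^\y := (\Phi_{0\to t})_\# p_0^\y$, i.e.\ the law of $\x_t=\Phi_{0\to t}(\z)$ with $\z\sim p_0^\y$. At $t=1$ this gives $p_1^\y=\Phi_\# p_0^\y=p(\cdot\mid\y)$ by Lemma~\ref{thm:exact}. The proof then has three steps: identify $p_t^\y$ as $r_t p_t$, show $r_t$ is a characteristic invariant, and combine the two to get the continuity equation \eqref{eq:posterior-transport}.

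\textbf{Step 1 ($p_t^\y=r_tp_t$).} Take any bounded test function $f$ and compute
\[
\E_{p_t^\y}[f]=\int f(\Phi_{0\to t}(\z))\,\frac{p(\y\mid\Phi(\z))}{p(\y)}\,p_0(\z)\,d\z .
\]
This uses that the normalizing constant of $p_0^\y$ is $\int p(\y\mid\Phi(\z))p_0(\z)d\z=\int p(\y\mid\x)p_1(\x)d\x=p(\y)$, which holds because $p_1=\Phi_\#p_0$. Next apply the flow composition property $\Phi=\Phi_{t\to1}\circ\Phi_{0\to t}$. The integrand then depends on $\z$ only through $\x=\Phi_{0\to t}(\z)$, so the integral equals $\int f(\x)\,r_t(\x)\,p_t(\x)\,d\x$ by the definition of pushforward. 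Since $f$ is arbitrary, $p_t^\y=r_tp_t$.

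\textbf{Step 2 (invariance of $r_t$).} The function $r_t(\Phi_{0\to t}(\z))=p(\y\mid\Phi(\z))/p(\y)$ does not depend on $t$, again by the semigroup property. Differentiating in $t$ along $\x_t=\Phi_{0\to t}(\z)$, using $\dot\x_t=\vv_t(\x_t)$, gives $\partial_t r_t+\vv_t\cdot\nabla r_t=0$ at every point $\x_t$. Because $\Phi_{0\to t}$ is a diffeomorphism onto $\R^d$, this holds everywhere.

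\textbf{Step 3 (transport equation).} The product rule gives
\[
\partial_t(r_tp_t)+\nabla\cdot(r_tp_t\vv_t)=r_t\big[\partial_tp_t+\nabla\cdot(p_t\vv_t)\big]+p_t\big[\partial_tr_t+\vv_t\cdot\nabla r_t\big].
\]
The first bracket vanishes by the prior continuity equation, and the second vanishes by Step 2. This yields \eqref{eq:posterior-transport}.

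A shortcut is also available: $p_t^\y$ is by definition a pushforward under the flow of $\vv_t$, so it satisfies the continuity equation with velocity $\vv_t$ directly, in the weak sense. Steps 1–2 are then what identify it with $r_tp_t$ in closed form.

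\textbf{Main obstacle: regularity.} The obstacle is regularity rather than algebra. Step 2's pointwise differentiation needs $r_t$ to be $C^1$ in $(t,\x)$. That holds if $\vv$ is $C^1$ and Lipschitz, so that $\Phi_{t\to1}$ is $C^1$ in both $t$ and $\x$, and the Gaussian likelihood is smooth in its argument. Step 3 needs $p_t$ to be a classical solution. I would state these smoothness assumptions explicitly. As a fallback, I would phrase everything weakly: test against $\varphi\in C_c^\infty$ and differentiate $\int\varphi(\Phi_{0\to t}(\z))p_0^\y(\z)d\z$ in $t$, which needs only Lipschitz $\vv$.
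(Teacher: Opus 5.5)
Your proof is correct, but your main line of argument differs from the paper's. The paper proves the transport equation by what you call the shortcut. For any initial law $\mu_0$, the curve $\mu_t=(\Phi_{0\to t})_\#\mu_0$ satisfies $\partial_t\mu_t+\nabla\cdot(\mu_t\vv_t)=0$ weakly: differentiate $g(\Phi_{0\to t}(\x))$ for $g\in C_c^\infty$ and integrate against $\mu_0$. Taking $\mu_0=p_0^\y$ together with Lemma~\ref{thm:exact} then gives \eqref{eq:posterior-transport} without ever using $r_t$. The identity $p_t^\y=r_tp_t$ is proved separately (Proposition~\ref{prop:reweight}) by a Bayesian argument: since $\x_1=\Phi_{t\to1}(\x_t)$ is a deterministic function of $\x_t$, the $h$-function $\E[p(\y\mid\x_1)\mid\x_t=\x]$ collapses to the point evaluation $p(\y\mid\Phi_{t\to1}(\x))$. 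The conservation law for $r_t$ is only asserted in the main text, with the one-line reason that the endpoint is constant along characteristics.

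Your route is Eulerian. You prove $p_t^\y=r_tp_t$ by a test-function computation using $\Phi=\Phi_{t\to1}\circ\Phi_{0\to t}$, which is a more explicit version of the paper's conditioning argument. You then derive the invariance of $r_t$ by the chain rule rather than just stating it, and obtain \eqref{eq:posterior-transport} from the product rule. This makes the ``Consequently'' in the statement literal and covers all three claims in one self-contained argument. The cost is regularity. Step~3 needs $r_t$ jointly $C^1$ and $p_t$ to be a classical solution of the prior continuity equation, which requires more smoothness of $\vv_t$ than the Lipschitz condition of Assumption~\ref{ass:lip}. The paper's weak pushforward argument needs only Lipschitz $\vv_t$. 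Since your stated fallback is exactly that weak argument, nothing is lost.
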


The conservation $\tfrac{D}{Dt}r_t=0$ holds because the endpoint
$\Phi_{t\to1}(\x_t)=\x_1$ is constant along a deterministic characteristic: the
``filtering'' expectation that defines the diffusion $h$-function collapses to a
point evaluation. This yields our central conceptual point, which sharply separates
the flow case from the diffusion case it is usually derived by analogy to.

\begin{proposition}[Zero-drift dichotomy / small-noise $h$-transform limit]
\label{prop:dichotomy}
For the family of noisy bridges
$d\x_t=[\vv_t(\x_t)+\epsilon^2\nabla\log h_t^\epsilon(\x_t)]\,dt+\epsilon\,dW_t$
conditioned on $\y$ at $t=1$, the Doob drift correction
$\epsilon^2\nabla\log h_t^\epsilon$ vanishes pointwise as $\epsilon\to0$, while the
endpoint constraint is enforced in the limit through the reweighted initial law
$p_0^\y$. Hence in the deterministic ($\epsilon=0$) flow limit, conditioning carries
\emph{no} drift and is realized entirely by source reweighting (proof in Appendix~\ref{app:proofs}).
\end{proposition}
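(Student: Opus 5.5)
We will prove Proposition~\ref{prop:dichotomy} by writing the Doob function explicitly, controlling its log-gradient uniformly in $\epsilon$, and then identifying the limit of the conditioned bridge with the deterministic flow started from $p_0^\y$.

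\textbf{Step 1: Feynman--Kac representation.} For $\epsilon>0$ let $X^\epsilon$ denote the reference diffusion $d\x_t=\vv_t(\x_t)\,dt+\epsilon\,dW_t$. The conditioning function is
\[
h_t^\epsilon(\x)=\E\big[p(\y\mid X_1^\epsilon)\,\big|\,X_t^\epsilon=\x\big].
\]
It solves the backward Kolmogorov equation $\partial_t h+\vv_t\cdot\nabla h+\tfrac{\epsilon^2}{2}\Delta h=0$ with terminal condition $h_1=p(\y\mid\cdot)$. By Doob's theorem, the conditioned process has drift $\vv_t+\epsilon^2\nabla\log h^\epsilon_t$ and initial law $\propto h_0^\epsilon\,p_0$.

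\textbf{Step 2: limit of $h^\epsilon$.} We assume $\vv_t$ is Lipschitz uniformly in $t$ and $\x\mapsto p(\y\mid\x)$ is bounded and Lipschitz; for Gaussian noise and Lipschitz $\mathcal A$ the latter holds. Grönwall gives $\E|X_1^\epsilon-\Phi_{t\to1}(\x)|\le C\epsilon$, so $h_t^\epsilon(\x)\to p(\y\mid\Phi_{t\to1}(\x))=p(\y)\,r_t(\x)$. This is exactly the point-evaluation collapse mentioned after Proposition~\ref{prop:invariant}.

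\textbf{Step 3: vanishing drift.} We need the factor $\epsilon^2$ to kill $\nabla\log h^\epsilon$. Differentiating under the expectation via the first-variation process $J_{t\to1}=\partial_\x X_1^\epsilon$ gives
\[
\nabla h^\epsilon_t(\x)=\E\big[J_{t\to1}^\top\nabla_\x p(\y\mid X_1^\epsilon)\big].
\]
With additive noise, $J$ solves a linear ODE driven by $\nabla\vv_t$, so $\|J\|\le e^{L}$. Hence $|\nabla h^\epsilon_t|\le C$ uniformly in $\epsilon$.

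The main obstacle is the lower bound on $h^\epsilon$, because $\nabla\log h=\nabla h/h$ and $h$ may be small. We handle it by localization. On a compact set $K$ with $p(\y\mid\Phi_{t\to1}(\x))\ge c>0$, Step 2 gives $h^\epsilon\ge c/2$ for all small $\epsilon$. Therefore
\[
\epsilon^2|\nabla\log h^\epsilon_t|\le 2C\epsilon^2/c\to0 .
\]
Because the Gaussian likelihood is strictly positive, every point lies in such a $K$, which gives the claimed pointwise convergence. As a stronger alternative, a Gaussian likelihood makes $\log p(\y\mid\cdot)$ have linearly growing gradient. Convexity-type (Prékopa/Brascamp--Lieb) bounds would then give $|\nabla\log h|\le C(1+|\x|)$, and hence $\epsilon^2\nabla\log h\to0$ locally uniformly.

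\textbf{Step 4: endpoint enforced by the initial law.} Step 2 gives $h_0^\epsilon\to p(\y\mid\Phi(\cdot))$. Dominated convergence, using boundedness of the likelihood, yields $h_0^\epsilon p_0/Z_\epsilon\to p_0^\y$ in total variation, with $p_0^\y$ as in Lemma~\ref{thm:exact}. The conditioned SDE therefore has initial law converging to $p_0^\y$ and a drift that converges locally uniformly to $\vv_t$, with noise $\epsilon\,dW\to0$. Stability of SDEs under vanishing noise and converging coefficients gives that the path law converges to that of the ODE~\eqref{eq:pfode} started from $p_0^\y$. Its time-$1$ marginal is $\Phi_\#p_0^\y=p(\cdot\mid\y)$ by Lemma~\ref{thm:exact}, and its marginal path matches \eqref{eq:posterior-transport}. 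This shows that at $\epsilon=0$ the conditioning is carried entirely by the source reweighting, with no drift.
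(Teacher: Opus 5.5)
Your proposal is correct and follows essentially the same route as the paper. A small-noise Gr\"onwall estimate collapses $h_t^\epsilon$ to the point evaluation $p(\y\mid\Phi_{t\to1}(\x))$, a first-variation representation bounds $\nabla h_t^\epsilon$, positivity of the limit keeps $h_t^\epsilon$ bounded below so that $\epsilon^2\nabla\log h_t^\epsilon\to0$, and the normalized $h_0^\epsilon p_0$ converges to $p_0^\y$. Two of your details improve slightly on the paper: the pathwise bound $\|J\|\le e^{L}$ is a cleaner substitute for its $L^2$ convergence of $J^\epsilon$ to $\nabla\Phi_{t\to1}$, and your Step~4 makes explicit the path-law limit that the paper only asserts.

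The Pr\'ekopa/Brascamp--Lieb aside is unnecessary, and it relies on log-concavity that fails for nonlinear $\vv_t$ or $\mathcal A$, so you can simply drop it.
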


Proposition~\ref{prop:dichotomy} is the conceptual core: porting a diffusion
guidance \emph{drift} (a Doob $h$-transform, as in classifier/measurement guidance
and \citet{denker2024deft}) onto a deterministic flow is structurally mismatched,
because in the flow limit there is no drift to port---the conditioning information
has migrated into the source. This explains, rather than merely observes, why
trajectory-guidance solvers cannot be exact in general (Section~\ref{sec:unify}).

\paragraph{Why solvers add a drift anyway.} Proposition~\ref{prop:invariant} also exposes
the catch: it requires \emph{initializing at the reweighted source} $p_0^\y$. One
can sample $p_0=\mathcal N(0,I)$, but $p_0^\y$ is intractable (it requires the
full flow map inside the likelihood), and naive importance reweighting of $p_0$
suffers from vanishing effective sample size in high dimension. Trajectory-guidance
solvers sidestep this by starting from the \emph{unconditional} $p_0$ and adding a
correction field $\uu_t$ to bend the marginals from $p_t$ to $p_t^\y$. The next
section makes this precise.

\section{Flow inverse solvers as approximate posterior corrections}
\label{sec:unify}

Suppose we integrate the corrected ODE
$\tfrac{d\x_t}{dt}=\vv_t(\x_t)+\uu_t(\x_t)$ from $\x_0\sim p_0$ and let $q_t^{\uu}$
be the resulting marginals. We seek $\uu_t$ that lands on the posterior,
$q_1^{\uu}=p(\cdot\mid\y)$. Among all such fields, the canonical choice is the one
of minimum kinetic energy:
\begin{equation}
\uu_t^\star=\argmin_{\uu}\;\int_0^1\!\E_{q_t^{\uu}}\big\|\uu_t(\x_t)\big\|^2\,dt
\quad\text{s.t.}\quad q_1^{\uu}=p(\cdot\mid\y).
\label{eq:ustar}
\end{equation}
This is a Benamou--Brenier / Schr\"odinger-type control problem: $\uu^\star$ is the
optimal-transport correction taking the prior path to the posterior path. It is the
deterministic analogue of the Doob/F\"ollmer drift, but defined through a chosen
mobility (here the Euclidean metric) rather than a diffusion coefficient.

We stress what is and is not being claimed. \emph{Any} field with
$q_1^{\uu}=p(\cdot\mid\y)$ is a valid corrector; $\uu^\star$ is merely the
\emph{canonical} (minimum-energy) representative, and Theorem~\ref{thm:bound} bounds
the bias of \emph{any} corrector by its discrepancy from $\uu^\star$. Existing
solvers do not attempt to compute $\uu^\star$; they follow a likelihood gradient and
are \emph{greedy, local} correctors that in general do \emph{not} reach the posterior
at all. So Proposition~\ref{thm:approx} should be read not as ``they approximate the
single object $\uu^\star$'' but as ``they are local likelihood-gradient correctors
whose gap to any exact corrector is what Theorem~\ref{thm:bound} controls and what
Section~\ref{sec:toy} measures.''

\begin{proposition}[Existing solvers as approximations of $\uu^\star$]
\label{thm:approx}
Write the generic correction as
$\uu_t(\x)=M_t(\x)\,\nabla_{\x}\log p\big(\y\mid \hat\x_1(\x,t)\big)$ for a mobility
operator $M_t$ and endpoint estimate $\hat\x_1$. Then:
\begin{enumerate}[leftmargin=1.5em,itemsep=0.1em]
\item \textbf{DPS / FlowDPS} use $\hat\x_1=\x+(1-t)\vv_t(\x)$ (flow Tweedie) and a
scalar mobility $M_t=\zeta_t I$, i.e.\ a zeroth-order (point-mass endpoint)
approximation of $\uu^\star$;
\item \textbf{FLOWER} replaces the point mass by an isotropic-Gaussian endpoint
posterior $\mathcal N(\hat\x_1,\sigma_{r,t}^2 I)$ and sets $M_t$ to the
corresponding Gaussian posterior-covariance (a $\Pi$GDM/CG projection);
\item \textbf{PnP-Flow} replaces the gradient step by a data-fidelity proximal map
followed by flow-path reprojection, a splitting approximation of the same
correction.
\end{enumerate}
All three neglect higher-order structure of the endpoint posterior
$p(\x_1\mid\x_t)$ and therefore incur a nonzero gap $\uu_t-\uu_t^\star$ even with
exact velocity fields.
\end{proposition}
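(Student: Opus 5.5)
The plan is to obtain an explicit reference form of the exact correction, rewrite each solver's update in the template $\uu_t(\x)=M_t(\x)\nabla_{\x}\log p(\y\mid\hat\x_1(\x,t))$, and then read off the gap. The first step is to characterize $\uu^\star$ through Proposition~\ref{prop:invariant}. Along the prior path the posterior marginals are $p_t^\y=r_t\,p_t$ with $r_t(\x)=p(\y\mid\Phi_{t\to1}(\x))/p(\y)$. Any corrector whose marginals satisfy $q_t^{\uu}=p_t^\y$ for $t>0$ must therefore obey
\[
\nabla\!\cdot\!\big(p_t^\y\,\uu_t\big)=-\partial_t p_t^\y-\nabla\!\cdot\!(p_t^\y\vv_t)=0
\]
on the interior. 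All the work is concentrated in moving mass from $p_0$ to $p_0^\y$.

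For the minimum-energy problem \eqref{eq:ustar} I will invoke the standard Benamou--Brenier first-order condition. It gives $\uu^\star_t=\nabla\psi_t$ for a potential $\psi_t$ solving a Hamilton--Jacobi equation coupled to the continuity equation, with a terminal condition fixed by the density ratio $p(\cdot\mid\y)/p_1$, that is, by $\log p(\y\mid\cdot)$. Linearizing in the likelihood (small-signal regime), $\psi_t$ is, to first order, a smoothed log-likelihood pulled back along the flow. This yields the reference expression
\[
\uu^\star_t(\x)\approx \zeta^\star_t\,\nabla_\x\log\E_{p(\x_1\mid\x_t=\x)}\big[p(\y\mid\x_1)\big].
\]
Here $p(\x_1\mid\x_t)$ is the endpoint posterior obtained from the stochastic interpolant; it is a point mass under the deterministic flow but a nontrivial law under the noisy bridges of Proposition~\ref{prop:dichotomy}, which is the regime in which guidance is derived.

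Next I will match each solver to a specific approximation of this expectation. (1) For DPS/FlowDPS, replacing $p(\x_1\mid\x_t)$ by $\delta_{\hat\x_1}$ with the flow-Tweedie mean $\hat\x_1=\x+(1-t)\vv_t(\x)$ gives exactly $\zeta_t\nabla\log p(\y\mid\hat\x_1)$, which is the zeroth-order (point-mass) approximation. (2) For FLOWER, replacing it by $\mathcal N(\hat\x_1,\sigma_{r,t}^2I)$ with a linear or linearized $\mathcal A$ makes the expectation a Gaussian integral. Its log-gradient is $\mathcal A^\top(\sigma^2I+\sigma_{r,t}^2\mathcal A\mathcal A^\top)^{-1}(\y-\mathcal A\hat\x_1)$, the $\Pi$GDM form whose inverse FLOWER computes by CG, so $M_t$ is the Gaussian posterior-covariance factor. (3) For PnP-Flow, I will write the proximal map as an implicit gradient step,
\[
\mathrm{prox}_{\gamma f}(\hat\x_1)=\hat\x_1-\gamma\nabla f(\mathrm{prox}_{\gamma f}(\hat\x_1)), \qquad f=-\log p(\y\mid\cdot),
\]
and compose it with the reprojection $\x\mapsto(1-t)\z+t\x$ to read off the effective displacement. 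This gives a forward-backward splitting of the same gradient flow, with effective mobility $t\gamma(I+\gamma\nabla^2 f)^{-1}$ to first order.

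The gap claim should follow by expanding $\log\E[p(\y\mid\x_1)]$ in cumulants of $p(\x_1\mid\x_t)$. Each scheme retains at most the first two cumulants with an isotropic covariance, so the third-order term and the anisotropic covariance contribute a residual. That residual is nonzero whenever the likelihood is non-quadratic or the endpoint posterior is non-Gaussian, for instance a multimodal $p_1$, even with an exact $\vv_t$.

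I expect the main obstacle to be step one: rigorously tying the nonlinear minimum-energy solution $\uu^\star$ to the likelihood-smoothed gradient. The exact $\uu^\star$ is a global OT object rather than a local expectation. I would therefore state the identification only to leading order in the guidance strength, or for Gaussian $p_1$ with linear $\mathcal A$, where $\uu^\star$ is linear and computable in closed form. I would then exhibit the nonzero gap by a two-mode mixture counterexample, which also matches the toy study of Section~\ref{sec:toy}.
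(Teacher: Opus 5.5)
\textbf{There is a genuine gap: your step one does not hold, and the rest of the argument is built on it.}

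In \eqref{eq:ustar} both $q_0=p_0$ and $q_1=p(\cdot\mid\y)$ are hard constraints. The potential in \eqref{eq:bb} therefore has no terminal condition $\log\big(p(\cdot\mid\y)/p_1\big)$. That condition belongs to the Doob $h$-transform picture, which also reweights the initial law (Proposition~\ref{prop:dichotomy}). Here the terminal potential is an unknown fixed by mass matching.

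Linearizing \eqref{eq:bb} correctly shows what actually happens. Take $p(\y\mid\cdot)/p(\y)=1+\epsilon f$ and $\uu^\star=\epsilon\nabla\psi$. Then $\partial_t\psi_t+\vv_t\cdot\nabla\psi_t=0$, so to first order $\uu^\star_t=\epsilon\nabla(\Lambda\circ\Phi_{t\to0})$. The continuity constraint forces $\Lambda$ to solve the source-space Poisson equation $-p_0^{-1}\nabla\cdot(p_0\bar G^{-1}\nabla\Lambda)=f\circ\Phi$, with $\bar G^{-1}=\int_0^1(\nabla\Phi_{0\to t})^{-1}(\nabla\Phi_{0\to t})^{-\top}dt$. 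The exact instantaneous score is instead $\mathbf{s}_t=\epsilon\nabla\big((f\circ\Phi)\circ\Phi_{t\to0}\big)+O(\epsilon^2)$.

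So your ``smoothing'' is the inverse of an elliptic operator acting on the source potential. It acts as a scalar only on that operator's eigenfunctions, and it does not arise from averaging $p(\y\mid\x_1)$ against an endpoint law. The passage to $\zeta^\star_t\nabla\log\E_{p(\x_1\mid\x_t)}[p(\y\mid\x_1)]$ is therefore unsupported.

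A concrete check shows the failure. Take $d=1$, $\vv\equiv0$ (so $p_1=p_0=\mathcal N(0,1)$ and $\Phi_{t\to1}=\mathrm{id}$), $y=x+\eta$ with $\eta\sim\mathcal N(0,\sigma^2)$, and $y\neq0$. The flow-Tweedie point is then exact and the endpoint law is a Dirac mass. The posterior is $\mathcal N(\mu,s^2)$ with $\mu=y/(1+\sigma^2)$ and $s^2=\sigma^2/(1+\sigma^2)$, and $\uu^\star_0(x)=\mu+(s-1)x$. Matching $\zeta(y-x)/\sigma^2$ would require both $\zeta=s^2$ and $\zeta=\sigma^2(1-s)$, i.e.\ $s=s^2$, which is impossible. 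As $\sigma\to\infty$, $\uu^\star_0(x)\approx(y-\tfrac12x)/\sigma^2$: the degree-one Hermite part of $\log p(y\mid\cdot)$ enters with weight $1$ and the degree-two part with weight $\tfrac12$ (inverse Ornstein--Uhlenbeck eigenvalues), as the Poisson equation predicts. So the identification fails exactly in the small-signal and Gaussian-linear regimes you planned to fall back on.

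Two consequences follow. First, your cumulant expansion measures each solver's distance to an $h$-function gradient, not to $\uu^\star$, and it misattributes the gap. In the example every cumulant correction vanishes, yet $\uu_t-\uu^\star_t\neq0$ for every scalar schedule $\zeta_t$. Second, you conflate two endpoint laws. The interpolant's conditional law of $\x_1$ given $\x_t$ does have mean $\x+(1-t)\vv_t(\x)$. But it is not the endpoint law relevant to conditioning the deterministic flow, nor that of the bridges in Proposition~\ref{prop:dichotomy}, which collapses to $\delta_{\Phi_{t\to1}(\x)}$ as $\epsilon\to0$.

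The paper's proof never equates $\uu^\star$ with a likelihood gradient. It keeps $\uu^\star_t=\nabla\phi_t$ as the solution of \eqref{eq:bb}. It takes the exact instantaneous score $\mathbf{s}_t$ of \eqref{eq:score} (Dirac endpoint, Proposition~\ref{prop:reweight}) as the target of guidance, and reads $\hat\x_1$ as the one-step approximation $\Phi_{t\to1}(\x)+O((1-t)^2)$ of the endpoint map. It then attributes the gap to two separate approximations: endpoint linearization, and greedily following a rescaled score instead of solving \eqref{eq:bb}. The example above isolates the second.

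To repair your proposal:
\begin{enumerate}
\item Drop the reference expression for $\uu^\star$.
\item Keep your per-solver identifications (1)--(3) as the descriptive part.
\item Establish the nonzero gap directly against $\uu^\star$, either by the closed-form Gaussian computation above or by your two-mode example evaluated against the true terminal law.
\end{enumerate}
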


\begin{theorem}[Posterior-bias bound]
\label{thm:bound}
Let $p_1^{\uu}$ be the terminal law of the corrected ODE with an approximate field
$\uu$, and let $p(\cdot\mid\y)$ be the true posterior. Under Lipschitz regularity
of $\vv_t+\uu_t$ (Assumption~\ref{ass:lip}, Appendix~\ref{app:proofs}),
\begin{equation}
W_2\big(p_1^{\uu},\,p(\cdot\mid\y)\big)\;\le\;
L\!\int_0^1\!\big(\E_{q_t^{\uu}}\|\uu_t-\uu_t^\star\|^2\big)^{1/2}dt
\;+\;\epsilon_{\mathrm{disc}}+\epsilon_{\mathrm{flow}},
\label{eq:bound}
\end{equation}
where $\epsilon_{\mathrm{disc}}$ is ODE discretization error and
$\epsilon_{\mathrm{flow}}$ is the flow-model approximation error.
\end{theorem}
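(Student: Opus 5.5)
The plan is a synchronous coupling of the two corrected ODEs followed by a Gr\"onwall estimate. First I separate the idealized continuous-time, exact-velocity problem from the two nuisance terms. Using the triangle inequality for $W_2$, I write
\[
W_2\big(p_1^{\uu},p(\cdot\mid\y)\big)\le W_2\big(p_1^{\uu},\tilde p_1^{\uu}\big)+W_2\big(\tilde p_1^{\uu},\tilde p_1^{\uu^\star}\big)+W_2\big(\tilde p_1^{\uu^\star},p(\cdot\mid\y)\big).
\]
Here tildes denote exact, continuous-time flows driven by the true velocity $\vv_t$. The first and last terms are, by definition, absorbed into $\epsilon_{\mathrm{disc}}+\epsilon_{\mathrm{flow}}$. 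The defining constraint in \eqref{eq:ustar} gives $\tilde p_1^{\uu^\star}=p(\cdot\mid\y)$, so the exact corrector contributes nothing beyond these errors. I assume the minimizer $\uu^\star$ exists, which Proposition~\ref{prop:invariant} makes plausible since the feasible set is nonempty, and that it is regular enough for its ODE to be well posed.

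For the middle term I couple the two flows synchronously. Both start from the same $\x_0\sim p_0$. Let $\x_t$ solve $\dot\x_t=\vv_t(\x_t)+\uu_t(\x_t)$ and $\x_t^\star$ solve $\dot\x_t^\star=\vv_t(\x_t^\star)+\uu_t^\star(\x_t^\star)$. The pair $(\x_1,\x_1^\star)$ is then a coupling of the two terminal laws, so $W_2\le(\E\|\x_1-\x_1^\star\|^2)^{1/2}$. Let $e_t=\x_t-\x_t^\star$. I add and subtract $\uu^\star_t(\x_t)$ to get
\[
\dot e_t=\big[(\vv_t+\uu_t^\star)(\x_t)-(\vv_t+\uu_t^\star)(\x_t^\star)\big]+\big[\uu_t(\x_t)-\uu_t^\star(\x_t)\big].
\]
The key choice is which field's Lipschitz constant to use. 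I evaluate the mismatch at $\x_t\sim q_t^{\uu}$ so that the forcing term is exactly $\uu_t-\uu_t^\star$ measured under $q_t^{\uu}$, as in \eqref{eq:bound}. The Lipschitz constant then has to be that of $\vv_t+\uu_t^\star$, with constant $K$. I will state Assumption~\ref{ass:lip} to cover this field, or both fields.

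Next I take $\|e_t\|$ and use $\tfrac{d}{dt}\|e_t\|\le K\|e_t\|+\|\uu_t(\x_t)-\uu_t^\star(\x_t)\|$. Applying Minkowski's inequality in $L^2(\Omega)$ gives $\tfrac{d}{dt}\|e_t\|_{L^2}\le K\|e_t\|_{L^2}+(\E_{q_t^{\uu}}\|\uu_t-\uu_t^\star\|^2)^{1/2}$. To be rigorous I will work with the integral form, to avoid differentiating the norm at zero. Gr\"onwall with $e_0=0$ yields $\|e_1\|_{L^2}\le e^{K}\int_0^1(\E_{q_t^{\uu}}\|\uu_t-\uu_t^\star\|^2)^{1/2}dt$, which identifies $L=e^{K}$. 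More sharply, one could use $L=\sup_t\exp\int_t^1K_s\,ds$ for time-dependent constants.

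The main obstacle is the regularity of $\uu^\star$. The minimum-energy corrector may be only $L^2(q^{\uu^\star})$, not globally Lipschitz, and the bound needs a Lipschitz estimate for the field whose error is \emph{not} the one being measured. My first attempt is to place the Lipschitz hypothesis on $\vv_t+\uu_t^\star$ explicitly in Assumption~\ref{ass:lip}. If that is too strong, the fallback is to swap roles: use the Lipschitz constant of $\vv_t+\uu_t$ and measure the mismatch under $q_t^{\uu^\star}$ instead. A further step would then be needed to transfer the expectation back to $q_t^{\uu}$, for instance via a density-ratio bound. Finally, I will make precise how the discretization error $\epsilon_{\mathrm{disc}}$ is propagated, taking the standard global-error bound of the integrator, which is of order $h^p$ times a Gr\"onwall factor.
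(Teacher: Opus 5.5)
Your proposal is correct and is essentially the paper's own proof. It uses a synchronous coupling from a shared $\x_0\sim p_0$, adds and subtracts $\uu_t^\star$ along the approximate trajectory so the mismatch is measured under $q_t^{\uu}$ with the Lipschitz constant of $\vv_t+\uu_t^\star$, then applies Gr\"onwall and Minkowski to get the factor $e^{K}$ (the paper's $L\leftarrow e^{L}$), and the triangle inequality to append $\epsilon_{\mathrm{disc}}+\epsilon_{\mathrm{flow}}$. You are right that the Lipschitz hypothesis is really needed on $\vv_t+\uu_t^\star$ rather than on $\vv_t+\uu_t$ as the statement reads; the paper's proof uses exactly this, reading Assumption~\ref{ass:lip} as covering $\uu^\star$ too, so your primary route suffices and the density-ratio fallback is unnecessary.
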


Equation~\eqref{eq:bound} is a stability guarantee: the posterior bias of any
trajectory-guidance solver is controlled by how well its correction matches
$\uu^\star$. The optimal corrector $\uu^\star$ is generally intractable, so we do
not claim \eqref{eq:bound} gives a computable error estimate; rather it identifies
the right object (the correction--$\uu^\star$ gap), whose magnitude we then measure
directly against a closed-form posterior in Section~\ref{sec:toy}. That measurement
shows the gap is substantial for likelihood-gradient guidance and, importantly, does
\emph{not} vanish under any scalar guidance strength---consistent with a
mis-specified correction direction rather than a mis-tuned magnitude.

\subsection{A large-deviation view of source reweighting and guidance collapse}
\label{sec:ldp}

The source representation \eqref{eq:source-posterior} has a clean small-noise
asymptotic geometry that sharpens \emph{why} source reweighting is exact yet hard,
and why guidance collapses. We take the \emph{small-observation-noise} regime, in
which only the measurement noise shrinks, $\y=A(\x)+\sqrt{\epsilon}\,\boldsymbol\eta$,
$\boldsymbol\eta\sim\mathcal N(0,I)$, and the \emph{source prior is kept fixed}
($p_0=\mathcal N(0,I)$, matching the actual model---we do \emph{not} temper the
prior). Writing $\ell_\y(\x)=\tfrac12\|A(\x)-\y\|^2$, the source posterior is the
tilted measure
\begin{equation}
p_0^{\y,\epsilon}(\z)\;\propto\;\exp\!\big(-\ell_\y(\Phi(\z))/\epsilon\big)\,p_0(\z).
\label{eq:rate}
\end{equation}

\begin{proposition}[Small-noise concentration and Laplace weights]
\label{prop:ldp}
Let $\ell_\y\!\circ\Phi$ be $C^2$ with isolated nondegenerate minimizers
$\{\z_k^\star\}$ on $\mathrm{supp}\,p_0$ (the measurement-consistent set). Then as
$\epsilon\to0$, $p_0^{\y,\epsilon}$ concentrates on $\{\z_k^\star\}$ and, by Laplace's
method, converges weakly to the categorical mixture $\sum_k w_k\,\delta_{\z_k^\star}$
with basin weights
\begin{equation}
w_k\;\propto\; p_0(\z_k^\star)\,\big|\nabla^2(\ell_\y\!\circ\Phi)(\z_k^\star)\big|^{-1/2},
\end{equation}
the prior entering through the prefactor $p_0(\z_k^\star)$. Since $\Phi$ is a
diffeomorphism, the data posterior $p^{\y,\epsilon}_1=\Phi_\#p_0^{\y,\epsilon}$
concentrates on $\{\Phi(\z_k^\star)\}$ with the same weights.
\end{proposition}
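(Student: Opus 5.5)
The plan is to apply Laplace's method directly to the tilted measure \eqref{eq:rate}, working in source coordinates where the prior is fixed and smooth. Set $F:=\ell_\y\circ\Phi$. The first step is to reduce to the case where the measurement-consistent set is exactly the set of global minimizers. Let $F_\star:=\min F$, and note $F_\star=0$ whenever $\y$ lies in the range of $A\circ\Phi$. We subtract $F_\star$ from the exponent; this cancels in the normalization. Only minimizers attaining $F_\star$ receive mass, so $\{\z_k^\star\}$ is understood as the set of global minimizers. We assume this set is finite and contained in a compact region. We also need a coercivity or tail condition: outside any union of balls $B_\delta(\z_k^\star)$ there should be a gap $F\ge F_\star+\eta(\delta)$ with $\eta(\delta)>0$. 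This follows if $F$ is bounded below away from the minimizers at infinity. Failing that, the Gaussian factor $p_0$ makes the tail integral negligible once we use $\exp(-(F-F_\star)/\epsilon)\le 1$ and take the tails of $p_0$ small.

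The second step is the local expansion around each $\z_k^\star$. Nondegeneracy and $C^2$ regularity give $F(\z)=F_\star+\tfrac12(\z-\z_k^\star)^\top H_k(\z-\z_k^\star)+o(\|\z-\z_k^\star\|^2)$ with $H_k=\nabla^2F(\z_k^\star)\succ0$. Continuity of $p_0$ gives $p_0(\z)=p_0(\z_k^\star)(1+o(1))$. The substitution $\z=\z_k^\star+\sqrt{\epsilon}\,\s$ and dominated convergence, using a lower quadratic bound on $B_\delta$ obtained by shrinking $\delta$, then yield
\[
\int_{B_\delta(\z_k^\star)} e^{-(F-F_\star)/\epsilon}p_0\,d\z=(2\pi\epsilon)^{d/2}\,p_0(\z_k^\star)\,|H_k|^{-1/2}\,(1+o(1)).
\]
The complement contributes at most $e^{-\eta(\delta)/\epsilon}$. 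This is exponentially smaller than $\epsilon^{d/2}$, so the normalizer is $(2\pi\epsilon)^{d/2}\sum_k p_0(\z_k^\star)|H_k|^{-1/2}(1+o(1))$.

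The third step is weak convergence. For a bounded continuous test function $f$, we repeat the computation with $f\,p_0$ in place of $p_0$. Local continuity of $f$ gives $\int f\,dp_0^{\y,\epsilon}\to\sum_k w_k f(\z_k^\star)$ with the stated weights. Concentration follows as a corollary: the mass outside $\bigcup_k B_\delta$ tends to $0$. For the data-space statement, $\Phi$ is continuous. By the continuous mapping theorem, $\int g\,d(\Phi_\#p_0^{\y,\epsilon})=\int g\circ\Phi\,dp_0^{\y,\epsilon}\to\sum_k w_k g(\Phi(\z_k^\star))$. This holds because $g\circ\Phi$ is bounded continuous. Diffeomorphism is only needed to guarantee that the points $\Phi(\z_k^\star)$ are distinct, so the weights do not merge. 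By Lemma~\ref{thm:exact}, this pushforward is exactly $p_1^{\y,\epsilon}$.

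The main obstacle is the uniform tail and gap control in the first step. The hypotheses as stated only give isolated minimizers, so I would make explicit that the set of global minimizers is finite and that $\liminf_{\|\z\|\to\infty}F>F_\star$, or at least that the minimizers are well separated. I would first try to derive this from the Gaussian tail of $p_0$ together with $F\ge0$, since this handles the non-compactness without further assumptions. Everything else is the standard Laplace computation.
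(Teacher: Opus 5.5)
Your proof is correct once the tail hypothesis you flag is actually imposed, and it takes a different route from the paper's.

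\textbf{How the paper argues.} The paper does not analyze the untempered measure \eqref{eq:rate} directly. It switches to the tempered scaling of Appendix~\ref{app:ldp-ext}, writing $p_0^{\y,\epsilon}\propto e^{-I_\y/\epsilon}$ with $I_\y=I_0+\ell_\y\circ\Phi-\min I_\y$. It then:
\begin{itemize}
\item gets a good rate function from coercivity of $I_0$;
\item uses Varadhan's lemma/the Laplace principle, whose LDP bounds give concentration on $\argmin I_\y$;
\item reads the weights off a second-order expansion of $I_\y$;
\item transfers to data space by the contraction principle.
\end{itemize}

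\textbf{How your route differs.} You do the Laplace computation with $p_0$ as a fixed $O(1)$ prefactor, prove weak convergence against bounded continuous test functions, and push forward by continuous mapping. This yields exactly the stated formula: prefactor $p_0(\z_k^\star)$ and the Hessian of $\ell_\y\circ\Phi$ alone. In the tempered scaling, by contrast, the Hessian is that of $I_0+\ell_\y\circ\Phi$, the minimizers are those of $I_\y$ rather than the measurement-consistent set, and the weights carry $e^{-I_\y(\z_k^\star)/\epsilon}$. The LDP bounds resolve only exponential rates, so in the paper too the prefactors come from the second-order expansion, which you carry out in full. For weak convergence of $\Phi_\#p_0^{\y,\epsilon}$ the contraction principle is unnecessary. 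What the paper's route buys is tightness for free, from the coercive $I_0=\tfrac12\|\z\|^2$.

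\textbf{Where your fallback fails.} That tightness is exactly what your fallback cannot supply, so drop it. Without tempering, the unnormalized mass near the minimizers is only $\Theta(\epsilon^{d/2})$. Bounding $e^{-(F-F_\star)/\epsilon}\le1$ and using $\int_{\|\z\|>R}p_0$ at fixed $R$ leaves an $\epsilon$-independent error that swamps this. Coercivity also cannot be extracted from Gaussian tails plus $F\ge0$.

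Here is a counterexample. Take $d=1$, $\Phi=\mathrm{id}$, $\y=0$, $A(x)=xe^{-x^4}$. Then $F=\tfrac12x^2e^{-2x^4}$ has a single nondegenerate minimizer at $0$, contributing mass $\sim\sqrt\epsilon$. Set $x_\epsilon=(\log(1/\epsilon))^{1/4}$. Then $\sup_{|x|\ge x_\epsilon}F/\epsilon\to0$, so the region $|x|\ge x_\epsilon$ carries mass of order $x_\epsilon^{-1}e^{-\sqrt{\log(1/\epsilon)}/2}\gg\sqrt\epsilon$. The normalized measure therefore loses all its mass to infinity. Well-separated minimizers do not prevent this.

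\textbf{What to assume instead.} You must add $\liminf_{\|\z\|\to\infty}F>F_\star$, or a comparable quantitative tail bound, as a hypothesis. This is the untempered counterpart of the coercivity of $I_0$ in the paper's proof. With it:
\begin{itemize}
\item finiteness of the global-minimizer set is automatic (isolated points of a compact set);
\item the gap $\eta(\delta)>0$ off $\bigcup_kB_\delta(\z_k^\star)$ follows from continuity on a large ball;
\item the complement contributes $O(e^{-\eta(\delta)/\epsilon})=o(\epsilon^{d/2})$.
\end{itemize}
The rest of your argument then goes through as written.
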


\emph{Regime and caveat.} Proposition~\ref{prop:ldp} is an \emph{asymptotic,
explanatory} statement about the small-observation-noise limit with a fixed prior; it
is not a claim about the fixed-finite-$\sigma$ posterior, and the basin picture is
exact only as $\epsilon\to0$. We use it to explain---not to certify---the finite-noise
phenomena observed empirically (ESS collapse, mode mis-weighting). The
path-space/Freidlin--Wentzell statements below additionally use a small \emph{dynamics}
noise (a reference SDE); these are likewise asymptotic.

Proposition~\ref{prop:ldp} (proof in Appendix~\ref{app:proofs}) makes three points
precise. \textbf{(i) Why source reweighting is exact but hard.} The exact estimator
weights $p_0$ by $e^{-\ell_\y(\Phi(\z))/\epsilon}$, but the rate function localizes
mass in $O(\sqrt{\epsilon})$ neighborhoods of $\{\z_k^\star\}$; unconditional draws
$\z\sim p_0$ rarely hit them, so the effective sample size collapses---a
large-deviation account of the high-dimensional ESS failure noted above.
\textbf{(ii) Why guidance mis-weights modes.} The true posterior is a Laplace
\emph{mixture} over basins with weights $w_k$. Source reweighting preserves these
inter-basin weights, whereas a likelihood-gradient correction reweights them
incorrectly because it transports mass by local ascent rather than by the global
optimal coupling. We make this precise only for an \emph{idealized caricature}---a
pure single-trajectory ascent of $\nabla\log r_t$, which collapses to one basin and
gives $\mathrm{TV}\ge 1-\max_k w_k$ independent of guidance strength
(Proposition~\ref{prop:collapse}, Appendix~\ref{app:ldp-ext}). Real solvers integrate
$\vv_t+\uu_t$, and the prior velocity $\vv_t$ itself spreads mass across modes, so
their failure is \emph{softer}: mode mis-weighting rather than exact single-basin
collapse. The idealized bound is therefore an upper caricature of the effect; what we
verify empirically is the milder but real mis-weighting (Table~\ref{tab:toy}: large
mode-weight error and a non-vanishing distributional gap that no guidance strength
removes) and, complementarily, the exponential ESS decay of exact source IS
(Proposition~\ref{prop:ess}; Figure/Table in Appendix~\ref{app:ldp-ext}). \textbf{(iii) What $\uu^\star$ is.}
The minimum-kinetic-energy correction \eqref{eq:ustar} is the deterministic-limit
minimum-action (Freidlin--Wentzell) control \citep{kifer1988random,dembo2009large}
steering the \emph{unconditional} source to the posterior; from the reweighted source
$p_0^\y$ no action is needed (Proposition~\ref{prop:invariant}). This situates our
transport picture within the Schr\"odinger-bridge / F\"ollmer-drift continuum
(Appendix~\ref{app:relation}).

\section{A controlled study with a closed-form posterior}
\label{sec:toy}

\begin{figure}[t]
\centering
\includegraphics[width=\linewidth]{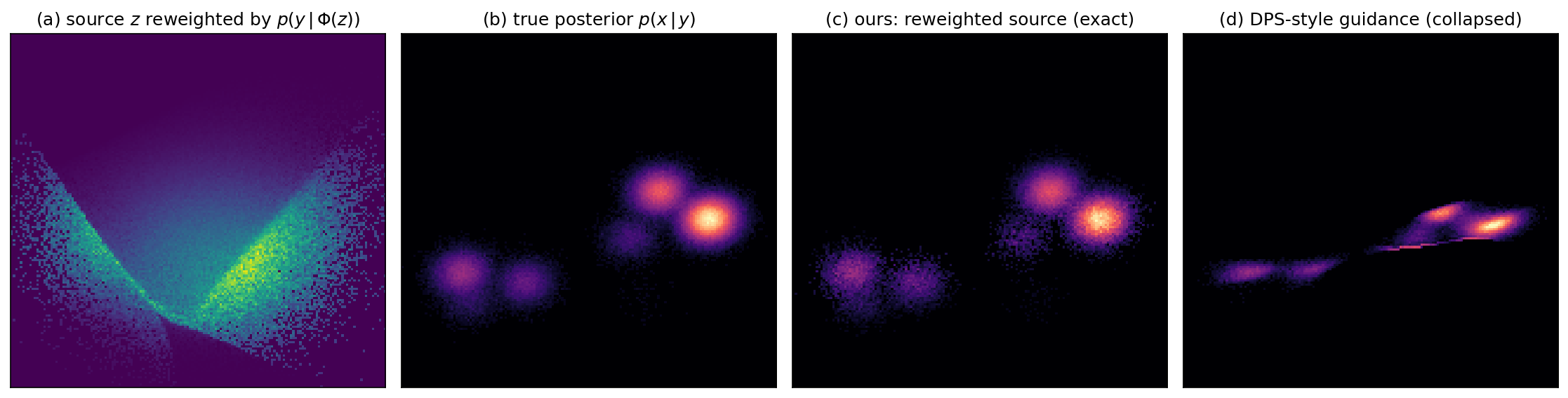}
\caption{\textbf{Conditioning lives in the source} (densities, $8$-mode $2$D GMM
prior, linear measurement). \textbf{(a)} The source $\mathcal N(0,I)$ reweighted by
the endpoint likelihood $p(\y\mid\Phi(\z))$---conditioning is a reweighting on the
source. \textbf{(b)} Closed-form true posterior. \textbf{(c)} Our source reweighting
pushed through the \emph{unmodified} velocity field---visually indistinguishable from
(b). \textbf{(d)} DPS-style trajectory guidance collapses the multimodal posterior
onto a single measurement-consistent ridge.}
\label{fig:toy}
\end{figure}

To test the theory free of model error we use a $2$D flow prior whose target is a
random $8$-component Gaussian mixture; the conditional Euler velocity is available
in closed form, so the flow map is essentially exact. With a linear-Gaussian
measurement the true posterior is again a Gaussian mixture, available in closed
form. We compare against this ground truth: (i) \textbf{source reweighting}
(ours)---integrate the unmodified ODE from $p_0$ and weight by the endpoint
likelihood (Lemma~\ref{thm:exact}); and (ii) \textbf{FlowDPS-style guidance}---
the Tweedie-endpoint likelihood-gradient correction, swept over guidance strength.
We report sliced-Wasserstein ($W_2$), energy distance, multi-scale MMD$^2$, and
mode-weight $\ell_1$ error, against a Monte-Carlo \emph{floor} (two independent
draws of the true posterior).

\begin{table}[t]
\centering
\small
\caption{\textbf{$2$D study against the closed-form posterior} ($K{=}8$ modes,
$20$k samples). Source reweighting reaches the Monte-Carlo floor on every metric;
trajectory guidance is $200$--$800\times$ worse on distributional metrics and badly
misestimates mode weights, for \emph{all} guidance strengths $g$.}
\label{tab:toy}
\begin{tabular}{lcccc}
\toprule
Method & sliced-$W_2$ & energy dist. & MMD$^2$ & mode-weight $\ell_1$ \\
\midrule
NULL floor (GT vs GT) & 0.060 & 0.0004 & $1.5\times10^{-4}$ & 0.024 \\
\textbf{Source reweighting (ours)} & \textbf{0.032} & \textbf{0.0006} & $\mathbf{2.1\times10^{-4}}$ & \textbf{0.028} \\
\midrule
FlowDPS-style guidance, $g{=}0.5$ & 0.71 & 0.122 & $4.4\times10^{-2}$ & 0.53 \\
FlowDPS-style guidance, $g{=}1.0$ & 0.38 & 0.087 & $3.0\times10^{-2}$ & 0.16 \\
FlowDPS-style guidance, $g{=}2.0$ & 0.53 & 0.243 & $7.8\times10^{-2}$ & 0.52 \\
FlowDPS-style guidance, $g{=}4.0$ & 0.84 & 0.307 & $9.5\times10^{-2}$ & 0.56 \\
\midrule
DAPS \citep{zhang2025improving} (decoupled annealing) & 0.87 & 0.076 & $1.5\times10^{-2}$ & 0.29 \\
MGPS \citep{moufad2025variational} (midpoint guidance) & 0.45 & 0.092 & $3.2\times10^{-2}$ & 0.17 \\
\bottomrule
\end{tabular}
\end{table}

\paragraph{Scope of this comparison.} Table~\ref{tab:toy} compares solver
\emph{mechanisms} in a setting where it is fair to do so: all methods act on the
\emph{same} analytic flow prior with a closed-form posterior, so differences reflect
the solver, not the prior. This is the only place we can directly compare against
diffusion-native solvers, which are normally run with a \emph{diffusion} prior on a
different benchmark---so a shared imaging table would confound prior and solver, and
we restrict the imaging tables (Section~\ref{sec:imaging}) to the flow-prior family.
We reproduce the \emph{mechanisms} of DAPS \citep{zhang2025improving} (decoupled annealing
with an exact Gaussian endpoint-posterior inner step, favourable to DAPS) and MGPS
\citep{moufad2025variational} (midpoint guidance) on the shared analytic prior. Both improve
on pure likelihood-gradient guidance (lower energy/MMD), yet remain
$\sim\!130$--$190\times$ the Monte-Carlo floor and still mis-weight modes
($\ell_1=0.17$--$0.29$): even an annealing/MCMC solver that provably targets the
posterior in the limit is, at practical budgets, far from the exact source
reweighting. (We do \emph{not} report a toy number for the optimization-based SITCOM
\citep{alkhouri2025sitcom}: its triple-consistency objective is defined for the
diffusion setup and our attempts to port it to the $2$D flow were unstable; we did
not want to report a strawman. It is discussed in Section~\ref{sec:related}.) These
results are the empirical signature of our thesis across the solver family.

Table~\ref{tab:toy} and Figure~\ref{fig:toy} confirm the theory decisively. Source
reweighting through the unmodified velocity field matches the true posterior to the
sampling floor on all four metrics, validating Proposition~\ref{prop:invariant}: the
conditioning is captured entirely in the source. Trajectory guidance, by contrast,
is two-to-three orders of magnitude worse on every distributional metric and
collapses the multimodal posterior onto a single ridge (Figure~\ref{fig:toy}d);
increasing the guidance strength does not fix this---the error plateaus and
then worsens---exactly the non-vanishing floor predicted by Theorem~\ref{thm:bound}.

\section{Image restoration}
\label{sec:imaging}

\paragraph{Setup.} We use AFHQ-Cat $256{\times}256$ with a pretrained
optimal-transport flow-matching prior, and the standard inverse problems of the
FLOWER benchmark \citep{pourya2026flower}: $\times4$ super-resolution, Gaussian
deblurring, and box inpainting, all at measurement noise $\sigma{=}0.05$. We report
PSNR and LPIPS. All methods share the same pretrained prior.

\paragraph{An analysis-guided solver.} Proposition~\ref{thm:approx} says a faithful
solver should (i) use a well-chosen mobility $M_t$ rather than a fixed scalar
step, and (ii) preserve, rather than collapse, posterior multimodality. We adopt a
cheap velocity-correction scheme that integrates the prior ODE from a \emph{random}
source draw and applies a per-step, mobility-modulated, \emph{normalized}
measurement correction (Appendix~\ref{app:method}); it requires one
forward/backward through the endpoint per step---\emph{no} backpropagation through
the ODE, unlike source-space optimizers such as D-Flow. Different source draws give
different posterior samples, from which we form a posterior mean and a per-pixel
uncertainty map.

\begin{table}[t]
\centering
\small
\caption{\textbf{AFHQ-Cat restoration} (PSNR\,$\uparrow$ / LPIPS\,$\downarrow$).
Our velocity-correction solver attains the best perceptual quality (LPIPS) on box
inpainting and is competitive throughout, while uniquely providing
posterior samples (Table~\ref{tab:uncert}) at low cost (Table~\ref{tab:capab}).
Best / second-best LPIPS per column in \textbf{bold} / \underline{underline}.}
\label{tab:imaging}
\begin{tabular}{lccc}
\toprule
Method & Super-resolution & Gaussian deblur & Box inpainting \\
\midrule
FLOWER \citep{pourya2026flower} & 25.95 / 0.276 & 27.25 / 0.301 & 25.65 / 0.077 \\
OT-ODE & 25.18 / \textbf{0.107} & 26.40 / \textbf{0.117} & 25.23 / 0.083 \\
Flow-Priors & 23.36 / 0.276 & 25.85 / 0.183 & 26.15 / 0.125 \\
D-Flow \citep{ben2024d} & 24.11 / 0.180 & 26.70 / 0.173 & 24.32 / 0.085 \\
PnP-Flow \citep{martin2025pnp} & -- & -- & 25.44 / 0.116 \\
\textbf{Ours (velocity correction)} & 25.07 / \underline{0.122} & 25.53 / \underline{0.130} & 25.57 / \textbf{0.052} \\
\bottomrule
\end{tabular}
\end{table}

Table~\ref{tab:imaging} shows our solver is competitive on distortion (PSNR) and
strong on perceptual quality: it attains the \emph{best} LPIPS on box inpainting
(0.052, vs.\ 0.077 for FLOWER) and the second-best on super-resolution and
deblurring, behind only OT-ODE. We do not claim uniform state-of-the-art on
single-reconstruction metrics---on these constraining problems several methods
cluster closely---because that is precisely the regime where such metrics are least
discriminative (Section~\ref{sec:toy}). The case for our solver is made on the axes
those metrics ignore, which we isolate next.

\paragraph{Statistical significance.} Paired Wilcoxon signed-rank tests on per-image
LPIPS ($n{=}10$ shared images per problem) confirm the perceptual gains over the
SOTA guidance solver are significant: \textbf{ours $<$ FLOWER} on super-resolution
($p{=}0.002$) and deblurring ($p{=}0.002$), and marginally on inpainting
($p{=}0.064$). Against the strong OT-ODE the picture is mixed, as expected from
Table~\ref{tab:imaging}: ours is significantly better on inpainting ($p{=}0.004$),
significantly worse on super-resolution ($p{=}0.027$), and statistically tied on
deblurring ($p{=}0.19$). Comparisons to D-Flow trend in our favor but are
underpowered ($n{=}5$, $p{\ge}0.06$). We report these honestly: our reconstruction
quality is competitive---clearly ahead of FLOWER perceptually, on par with OT-ODE---
not uniformly state-of-the-art, consistent with our thesis that single-reconstruction
metrics are not where the contribution lies.

\paragraph{Robustness to sample size.} To check that these conclusions are not
artifacts of small $n$, we re-ran the main comparisons at $n{=}50$ (AFHQ) and
$n{=}100$ (CelebA super-resolution). The ordering is unchanged:
AFHQ super-resolution LPIPS---ours $0.141$, OT-ODE $0.113$, FLOWER $0.256$;
AFHQ inpainting---ours $\mathbf{0.050}$ vs.\ OT-ODE $0.094$;
AFHQ deblurring---ours $0.130$; CelebA super-resolution---ours $0.029$ vs.\ OT-ODE
$0.026$. The OOD advantage also persists at $n{=}50$ (Table~\ref{tab:ood}). Thus the
$n{=}10$ tables are representative; the larger-$n$ numbers move by $<\!0.02$ LPIPS and
preserve every reported comparison.

\paragraph{What our solver offers that the baselines do not.}
Table~\ref{tab:capab} summarizes the qualitative differences. Trajectory-guidance
solvers (FLOWER, OT-ODE, PnP-Flow) return a single, near-deterministic
reconstruction and no uncertainty. D-Flow, the source-space competitor, returns a
single MAP point \emph{and} requires backpropagation through the entire ODE, making
it the most expensive method by far. Our solver is the only one that
simultaneously (i) produces diverse posterior samples, (ii) yields a
per-pixel uncertainty map that tracks error (Table~\ref{tab:uncert}), and (iii) avoids ODE
backpropagation. It is also the \emph{fastest} method measured---$3\times$ faster
than OT-ODE, $19\times$ faster than FLOWER, and $76\times$ faster than D-Flow
(Table~\ref{tab:capab})---because it neither backpropagates through the ODE nor
requires hundreds of solver steps.

\begin{table}[t]
\centering
\small
\caption{\textbf{Capabilities and cost.} Beyond a single reconstruction, only our
solver provides posterior samples and error-correlated uncertainty without
backpropagating through the ODE. Runtime is wall-clock seconds per $256{\times}256$
image (AFHQ super-resolution, same GPU; each method at its reported step budget,
ours with a single sample).}
\label{tab:capab}
\begin{tabular}{lcccc}
\toprule
Method & posterior samples & uncertainty--error corr. & ODE-backprop-free & runtime (s/img) \\
\midrule
FLOWER & \ding{55} (single) & \ding{55} & \ding{51} & 124.7 \\
OT-ODE & \ding{55} (single) & \ding{55} & \ding{51} & 20.2 \\
D-Flow & \ding{55} (single MAP) & \ding{55} & \ding{55} (needs backprop) & 503.7 \\
\textbf{Ours} & \ding{51} (diverse) & \ding{51} ($\rho{=}0.76$) & \ding{51} & \textbf{6.6} \\
\bottomrule
\end{tabular}
\end{table}

\paragraph{Posterior uncertainty.} The distinguishing property of our solver is
that it samples the posterior rather than returning a point estimate, yielding a
per-pixel uncertainty map (the across-sample standard deviation). Figure~\ref{fig:heatmap}
shows that this map concentrates exactly where the reconstruction error
concentrates---the inpainted region---so the solver localizes ``what it does not
know.'' Quantitatively (Table~\ref{tab:uncert}), the Pearson correlation between the
per-pixel uncertainty and the per-pixel error is $\rho=0.76\pm0.02$ ($95\%$ CI over
$60$ inpainting images), $\rho=0.65\pm0.02$ on CelebA inpainting ($62$ images), and
$\rho=0.53\pm0.02$ on super-resolution ($20$ images, where the error is more
diffuse)---a strong, highly significant positive correlation across operators and
datasets (a one-sample test of per-image $\rho>0$ gives $p<10^{-12}$ throughout).
We deliberately
\emph{avoid} the word ``calibrated'': $\rho$ indicates the uncertainty tracks
error in aggregate, not that it is a calibrated credible interval; establishing the
latter would require conformal or coverage-based evaluation, which we leave to future
work. The point is comparative---point-estimate baselines (D-Flow, OT-ODE) furnish
\emph{no} uncertainty at all. Ensembling additionally helps quality: the
posterior-mean over $8$ samples raises inpainting PSNR from $23.79$ (single sample,
Table~\ref{tab:imaging}) to $25.82$, above the source-space MAP baseline D-Flow
($24.32$), at far lower cost (no ODE backpropagation). Additional examples and
$8$-sample montages are in Appendix~\ref{app:morefigs}.

\begin{figure}[t]
\centering
\includegraphics[width=0.92\linewidth]{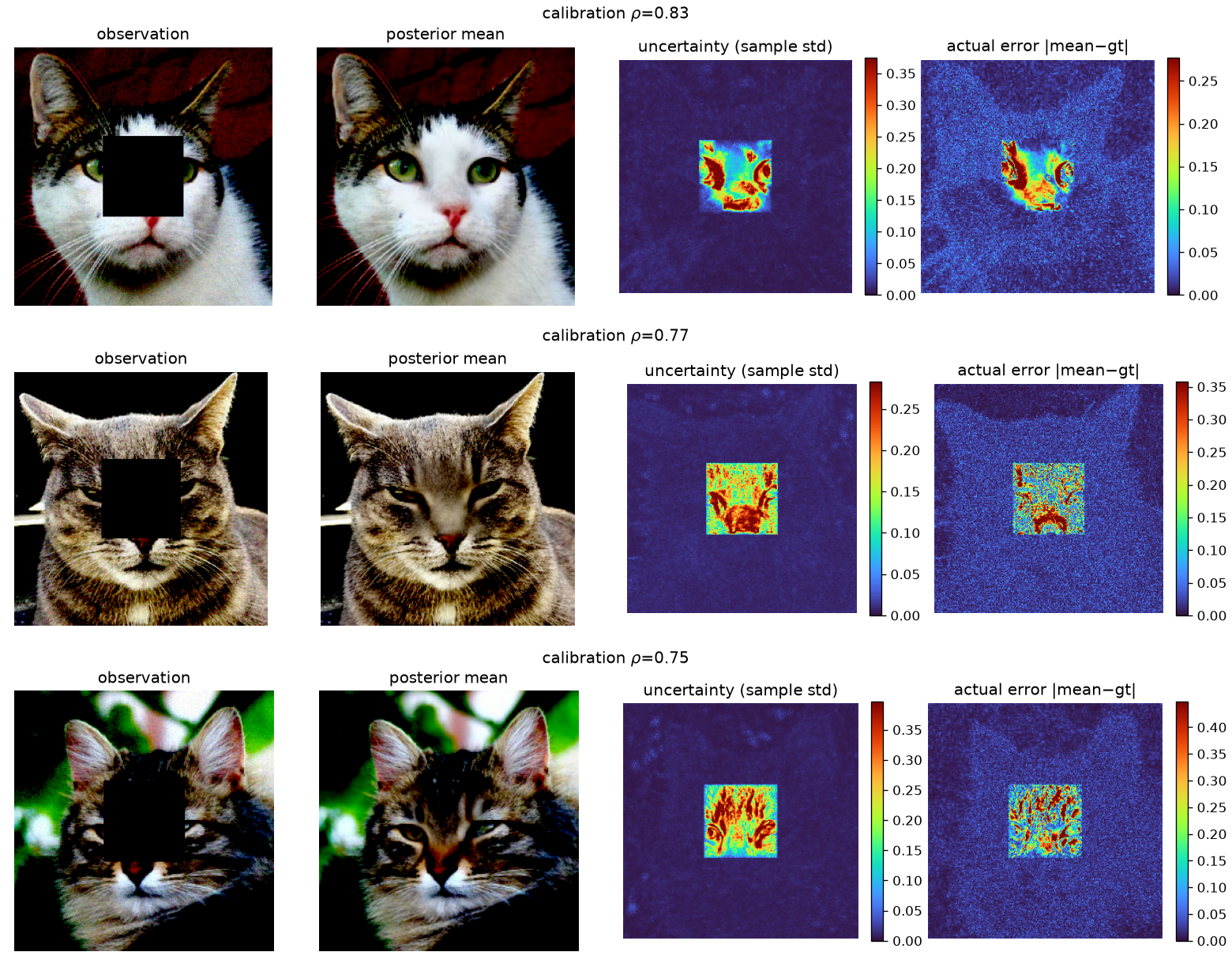}
\caption{\textbf{The uncertainty map tracks the error} (box inpainting, $3$
examples). Per row: degraded observation; posterior mean; per-pixel uncertainty
(across-sample std); actual error $|\text{mean}-\text{gt}|$ (both in the \texttt{turbo} colormap, observation/mean
gamma-brightened for display). Uncertainty and error co-localize on the inpainted
region (per-image $\rho$ in titles); point-estimate solvers produce no uncertainty map.}
\label{fig:heatmap}
\end{figure}

\begin{table}[t]
\centering
\small
\caption{\textbf{Posterior uncertainty} ($8$ samples). The Pearson correlation
$\rho$ ($95\%$ CI) between the per-pixel sample std and the per-pixel reconstruction
error measures whether the solver ``knows what it does not know''; computed per image (clean log) and consistent across operators/datasets, highly significant ($p<10^{-12}$, one-sample test of per-image
$\rho>0$). Point-estimate solvers (D-Flow) produce no uncertainty; the posterior
mean over samples additionally improves PSNR.}
\label{tab:uncert}
\begin{tabular}{lcccc}
\toprule
Method & operator & post.-mean PSNR & diversity (std) & uncertainty--error $\rho$ \\
\midrule
D-Flow (single MAP) & inpainting & 24.32 & 0 (point est.) & n/a \\
\textbf{Ours} ($8$ samples) & inpainting ($60$ im.) & \textbf{25.82} & 0.042 & \textbf{0.76$\pm$0.02} \\
\textbf{Ours} ($8$ samples) & super-res ($20$ im.) & 25.6 & 0.038 & \textbf{0.53$\pm$0.02} \\
\bottomrule
\end{tabular}
\end{table}

\paragraph{Ablation.} Table~\ref{tab:ablation} isolates the ingredients of our
solver on box inpainting, addressing whether the quality comes merely from gradient
normalization or step-size tuning. It does not: removing the Tweedie endpoint
estimate (guiding on $\x_t$ rather than $\hat\x_1$) collapses LPIPS from $0.045$ to
$0.248$, and removing per-image gradient normalization is catastrophic (the
unnormalized update diverges for all but the smallest $\gamma$, peaking at LPIPS
$0.24$). Both ingredients are necessary, not incidental. The mobility schedule is a
minor knob (constant slightly beats $\sigma_{r,t}^2$, which vanishes near $t=1$);
guidance strength behaves monotonically without a sharp optimum; and the
posterior-mean over samples steadily improves PSNR ($26.9\!\to\!28.2\!\to\!29.2$ for
$1/4/8$ samples), the ensembling benefit that single-shot solvers forgo.

\begin{table}[t]
\centering
\small
\caption{\textbf{Ablation} (box inpainting, $20$ images, PSNR\,$\uparrow$ / LPIPS\,$\downarrow$).
The Tweedie endpoint and gradient normalization are both essential; mobility and
guidance strength are secondary; more posterior samples monotonically improve the
posterior-mean PSNR.}
\label{tab:ablation}
\begin{tabular}{lcc}
\toprule
Variant & PSNR & LPIPS \\
\midrule
\textbf{Full} ($\gamma{=}2$, const mobility, normalized, Tweedie) & \textbf{26.87} & \textbf{0.045} \\
\midrule
$-$ Tweedie endpoint (guide on $\x_t$) & 18.80 & 0.248 \\
$-$ gradient normalization (best $\gamma{=}0.01$) & 17.50 & 0.239 \\
mobility $\sigma_{r,t}^2$ (vs.\ constant) & 25.26 & 0.120 \\
\midrule
guidance $\gamma=0.5\,/\,1\,/\,4$ & 16.12 / 24.07 / 26.95 & 0.427 / 0.134 / 0.059 \\
Euler steps $50\,/\,100$ & 23.77 / 26.87 & 0.154 / 0.045 \\
posterior samples $1\,/\,4\,/\,8$ & 26.87 / 28.16 / 29.24 & 0.045 / 0.042 / 0.042 \\
\bottomrule
\end{tabular}
\end{table}

\subsection{Additional datasets: a second in-domain prior and out-of-distribution priors}
\label{sec:moredata}

We repeat the study with a second pretrained flow prior, CelebA ($128{\times}128$
faces), and additionally probe robustness with two \emph{out-of-distribution} (OOD)
settings in which the AFHQ-\emph{cat} prior is applied to AFHQ-\emph{dog} and
AFHQ-\emph{wild} images (prior--data mismatch). (Only AFHQ-cat and CelebA flow
priors are publicly released; we use both, plus the OOD reuse of the cat prior.)

\paragraph{Second in-domain dataset (CelebA).} Table~\ref{tab:celeba} shows the same
pattern as AFHQ: our solver is competitive on quality---best or second-best LPIPS on
super-resolution and random inpainting---without being uniformly top on PSNR. Its
uncertainty remains strongly error-correlated on faces: $\rho=0.65\pm0.02$
(inpainting, $62$ images), again far exceeding the zero uncertainty of the
point-estimate
baselines. The posterior mean over $8$ samples lifts super-resolution to
$32.9/0.021$ (PSNR/LPIPS), competitive with the best baseline.

\paragraph{Out-of-distribution priors (AFHQ-dog / -wild).} Under prior--data
mismatch our solver \emph{generalizes markedly better perceptually} than guidance
(super-resolution, $n{=}50$): on dog images ours attains LPIPS $0.183$ vs.\ FLOWER's
$0.260$, and on wild images $0.195$ vs.\ $0.309$ (Table~\ref{tab:ood}); the inpainting
LPIPS are also lower on both (dog $0.061$ vs.\ $0.071$, wild $0.058$ vs.\ $0.078$). Guidance over-commits to the
(mismatched) prior's modes, whereas our correction stays closer to the measurement.
Uncertainty still localizes correctly on the inpainted region even OOD (per-image
$\rho\approx0.7$; Figure~\ref{fig:crossdata}).

\begin{table}[t]
\centering
\small
\begin{minipage}{0.66\linewidth}
\centering
\caption{\textbf{CelebA} ($128^2$, $20$ imgs; PSNR\,$\uparrow$/LPIPS\,$\downarrow$).
Best/2nd LPIPS per row in \textbf{bold}/\underline{underline}. Uncertainty--error
$\rho=0.65\pm0.02$ (inpainting).}
\label{tab:celeba}
\small
\setlength{\tabcolsep}{3.5pt}
\begin{tabular}{lcccc}
\toprule
Operator & Ours & FLOWER & OT-ODE & D-Flow \\
\midrule
Super-res. & 30.9 / \underline{0.028} & 32.3 / 0.032 & 31.6 / \textbf{0.018} & 29.5 / 0.036 \\
Inpainting & 31.0 / 0.028 & 31.2 / \textbf{0.018} & 29.8 / 0.033 & 31.0 / \underline{0.021} \\
Deblur & 31.5 / 0.064 & 34.9 / \textbf{0.025} & -- & -- \\
Rand.-inp. & 33.1 / \textbf{0.011} & 33.8 / \underline{0.020} & -- & -- \\
\bottomrule
\end{tabular}
\end{minipage}\hfill
\begin{minipage}{0.32\linewidth}
\centering
\caption{\textbf{OOD} (cat prior on dog/wild, $n{=}50$; LPIPS, ours wins all).}
\label{tab:ood}
\small
\setlength{\tabcolsep}{3pt}
\begin{tabular}{lcc}
\toprule
& Ours & FLOWER \\
\midrule
dog SR & \textbf{.183} & .260 \\
dog inp & \textbf{.061} & .071 \\
wild SR & \textbf{.195} & .309 \\
wild inp & \textbf{.058} & .078 \\
\bottomrule
\end{tabular}
\end{minipage}
\end{table}

\begin{figure}[t]
\centering
\includegraphics[width=0.92\linewidth]{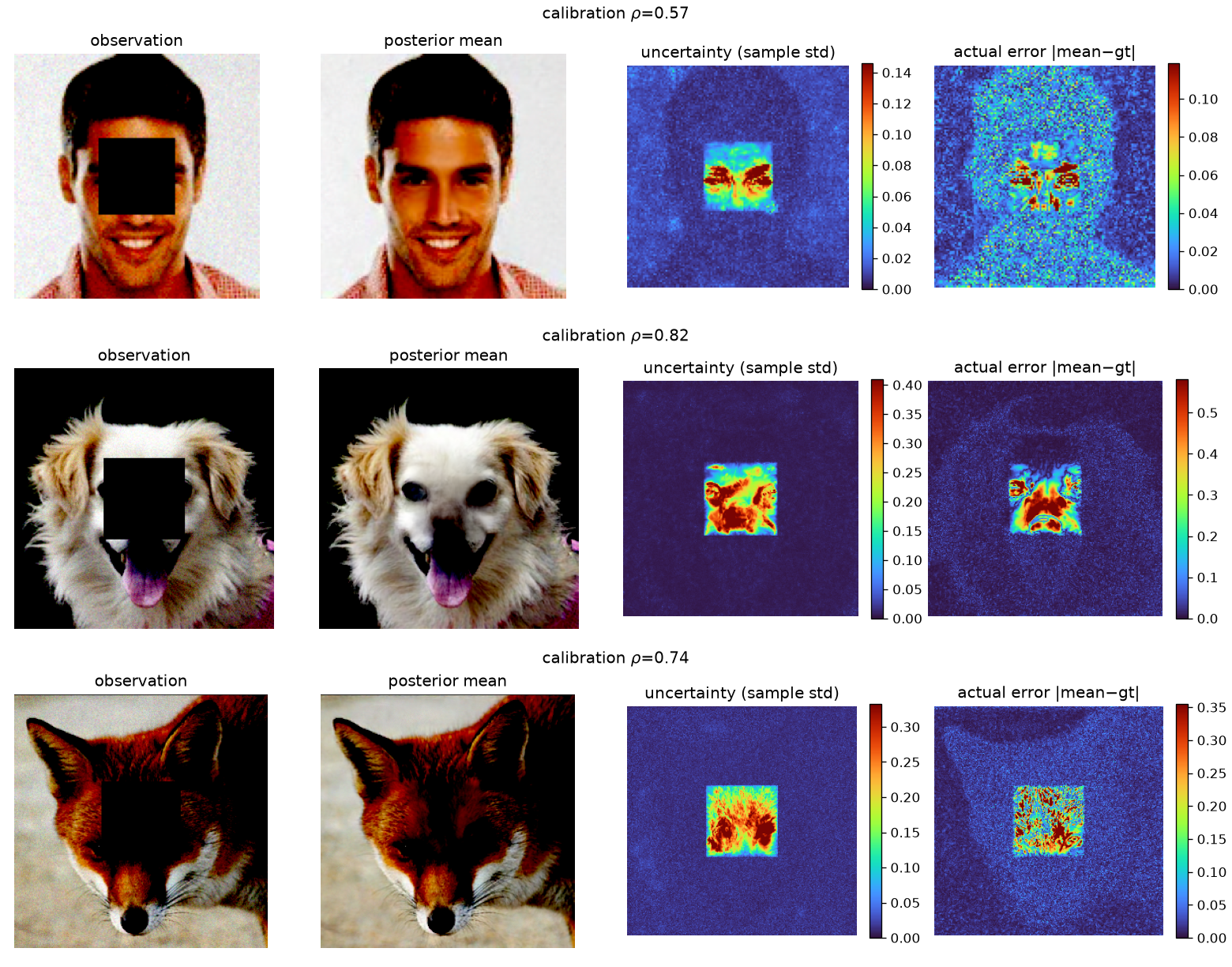}
\caption{\textbf{Uncertainty localizes across datasets and OOD} (box inpainting).
Rows: CelebA face, AFHQ-dog (OOD), AFHQ-wild (OOD). Columns: observation, posterior
mean, uncertainty (sample std), error. The uncertainty tracks the error on the
inpainted region even when the prior is mismatched (per-image $\rho$ in titles).}
\label{fig:crossdata}
\end{figure}

\subsection{Scope, and the theory--practice gap}
\label{sec:scope}

We are deliberately explicit about what the theory does and does not certify, since
the analysis and the practical solver live at different scales.

\textbf{What the theory establishes (the concept).} For deterministic flow priors,
conditioning is exact source reweighting and carries no drift
(Lemma~\ref{thm:exact}, Propositions~\ref{prop:invariant}--\ref{prop:dichotomy});
trajectory guidance is a greedy local corrector whose posterior gap is bounded by
its distance to any exact corrector (Theorem~\ref{thm:bound}) and is, empirically,
large and untunable. The $2$D study \emph{decisively} validates this concept against
a closed-form posterior (Table~\ref{tab:toy}).

\textbf{Why the image solver is a separate, heuristic object.} The exact estimator
(source importance sampling) is intractable beyond low dimension: its effective
sample size collapses from $\approx\!1$ at $d{=}2$ to $\approx\!1/N$ by $d{=}128$,
and the IS posterior mean becomes worse than the prior mean
(Table~\ref{tab:essdim}). Hence at image scale one \emph{cannot} run the exact
method, and our velocity-correction solver is an amortized forward heuristic
\emph{motivated by}---not derived from---the analysis. Its ablation
(Table~\ref{tab:ablation}) is honest about this: the dominant ingredient is the
Tweedie endpoint estimate, which matches the endpoint-map approximation the theory
singles out (Proposition~\ref{thm:approx}), but the other essential ingredient,
per-image gradient normalization, is a \emph{numerical stabilizer} not predicted by
the optimal-transport analysis, and the theory-favored mobility schedule is only a
minor knob. We therefore do \emph{not} claim the image solver provably samples the
posterior; we assess its faithfulness only indirectly (uncertainty--error
correlation $\rho\!\approx\!0.53$--$0.76$; OOD generalization). The decisive
posterior-faithfulness evidence is for the \emph{concept} (2D), not for this
practical solver.

\textbf{Open problem.} Closing this gap---a provably faithful \emph{and} scalable
source-side posterior sampler for flow priors---is, in our view, the natural next
step that the posterior-transport viewpoint makes precise.

\section{Related work}
\label{sec:related}

\paragraph{Diffusion-based inverse solvers.} The dominant template is
likelihood-gradient guidance: DPS \citep{chung2022diffusion} adds a Tweedie-endpoint
manifold-constrained gradient to the reverse diffusion, MCG \citep{chung2022improving}
augments it with a manifold projection, $\Pi$GDM \citep{song2023pseudoinverse}
replaces the point likelihood by an isotropic-Gaussian (pseudoinverse) approximation,
and DDNM \citep{wang2023zeroshot} projects onto the measurement null-space. A second line
reduces the greedy per-step error our analysis flags: DAPS \citep{zhang2025improving}
decouples consecutive steps with noise annealing and an MCMC (Langevin/HMC) update so
the time-marginals anneal to the posterior; MGPS \citep{moufad2025variational} performs
variational posterior sampling with midpoint guidance; and SITCOM
\citep{alkhouri2025sitcom} is optimization-based, enforcing measurement, forward- and
\emph{backward}-diffusion consistency by optimizing the model input at each step.
Provably consistent SMC/annealing samplers also exist
\citep{xu2024provably,wu2023practical,cardoso2024monte}. Our viewpoint organizes these:
pure guidance (DPS/MCG/$\Pi$GDM/DDNM) are the greedy local correctors of
Proposition~\ref{thm:approx}; the annealing/MCMC methods (DAPS/MGPS) and the
optimization methods (SITCOM, and D-Flow below) are partial moves \emph{toward the
source side}---trading per-step locality for exploration or latent optimization---
which our source-transport analysis (Section~\ref{sec:exact}) identifies as the exact
object.

\paragraph{Flow-matching inverse solvers.} FlowDPS \citep{kim2025flowdps},
FLOWER \citep{pourya2026flower} and PnP-Flow \citep{martin2025pnp} port the
guidance template to deterministic flow priors. We do not propose another such
solver; we explain what their corrections approximate
(Proposition~\ref{thm:approx}), bound the posterior bias
(Theorem~\ref{thm:bound}), and show that for a deterministic flow conditioning is a
source reweighting rather than a drift.

\paragraph{Source-space and exact-posterior methods.} D-Flow \citep{ben2024d}
optimizes the source latent, returning a single MAP-like point; concurrent work
explores source-space posterior \emph{sampling} (e.g.\ Langevin in latent space).
Our Lemma~\ref{thm:exact} and Proposition~\ref{prop:dichotomy} give the exact-transport justification for source-side
conditioning and clarifies its relationship to trajectory guidance. On the
diffusion side, provably consistent posterior samplers with bounds exist
\citep{xu2024provably}; they target SDE priors and do not provide the flow
velocity-plus-correction decomposition or the unification of flow solvers.

\paragraph{$h$-transforms and guidance.} Doob $h$-transforms underpin conditional
diffusion training and guidance \citep{denker2024deft}; the deterministic-flow case
we treat is structurally different (Section~\ref{sec:exact}).

\section{Conclusion}

We gave a posterior-transport account of flow-based inverse problem solving: for a
deterministic flow prior, conditioning is a reweighting of the source, and existing
trajectory-guidance solvers are approximations of the minimum-kinetic-energy
correction needed when one instead starts from the unconditional source. A
closed-form study confirms that source reweighting is exact while trajectory
guidance is structurally biased and mode-collapsing. The analysis yields a cheap,
faithful velocity-correction solver whose uncertainty correlates with error. We hope the
posterior-transport view offers a principled basis for designing and evaluating the
next generation of flow inverse solvers, beyond single-reconstruction metrics.

\bibliography{iclr2026_conference}
\bibliographystyle{iclr2026_conference}

\appendix
\section{Proofs}
\label{app:proofs}

Throughout, $\Phi_{s\to t}$ is the flow map of the probability-flow ODE
$\dot\x_t=\vv_t(\x_t)$, $\Phi:=\Phi_{0\to1}$, and $p_0=\mathcal N(0,I)$. We make the
standard regularity assumption.

\begin{assumption}[Regularity]
\label{ass:lip}
The velocity field $\vv_t$ and corrections $\uu_t$ are continuous in $t$ and
$L$-Lipschitz in $\x$, uniformly in $t\in[0,1]$; the flow map $\Phi$ is a
$C^1$-diffeomorphism with $\det\nabla\Phi\neq0$; and the likelihood
$\x\mapsto p(\y\mid\x)$ is bounded and measurable. These hold for the smooth
velocity fields produced by trained flow models on bounded domains.
\end{assumption}

\subsection{Proof of Lemma~\ref{thm:exact} and Proposition~\ref{prop:invariant}}

\paragraph{Reweighting identity.} Let $p_0^\y(\z)=p(\y\mid\Phi(\z))p_0(\z)/Z$ with
$Z=\int p(\y\mid\Phi(\z))p_0(\z)\,d\z$. For any bounded test function $g$,
\begin{align}
\E_{\z\sim p_0^\y}\!\big[g(\Phi(\z))\big]
&=\frac1Z\int g(\Phi(\z))\,p(\y\mid\Phi(\z))\,p_0(\z)\,d\z \nonumber\\
&=\frac1Z\int g(\x)\,p(\y\mid\x)\,p_0(\Phi^{-1}(\x))\,\big|\det\nabla\Phi^{-1}(\x)\big|\,d\x
&&(\x=\Phi(\z))\nonumber\\
&=\frac1Z\int g(\x)\,p(\y\mid\x)\,p_1(\x)\,d\x
=\int g(\x)\,p(\x\mid\y)\,d\x,
\end{align}
where the third equality uses the change-of-variables identity for the pushforward
density, $p_1(\x)=p_0(\Phi^{-1}(\x))\,|\det\nabla\Phi^{-1}(\x)|$, and the last uses
$Z=\int p(\y\mid\x)p_1(\x)\,d\x=p(\y)$ together with Bayes' rule
$p(\x\mid\y)=p(\y\mid\x)p_1(\x)/p(\y)$. As this holds for all bounded $g$,
$\Phi_\#p_0^\y=p(\cdot\mid\y)$.

\paragraph{Posterior path.} Let $\mu_0$ be any initial law and
$\mu_t:=(\Phi_{0\to t})_\#\mu_0$. Because $\{\Phi_{0\to t}\}_t$ is the flow of
$\dot\x=\vv_t(\x)$, the curve $t\mapsto\mu_t$ satisfies the continuity (Liouville)
equation $\partial_t\mu_t+\nabla\!\cdot\!(\mu_t\vv_t)=0$ in the weak sense: for
$g\in C_c^\infty$, $\frac{d}{dt}\int g\,d\mu_t=\int\nabla g\cdot\vv_t\,d\mu_t$ by
differentiating $g(\Phi_{0\to t}(\x))$ and integrating against $\mu_0$. Taking
$\mu_0=p_0^\y$ gives $p_t^\y=(\Phi_{0\to t})_\#p_0^\y$ with
$\partial_t p_t^\y+\nabla\!\cdot\!(p_t^\y\vv_t)=0$ and terminal value
$p_1^\y=\Phi_\#p_0^\y=p(\cdot\mid\y)$. The velocity field is the prior $\vv_t$,
independent of $\y$; conditioning enters only through the initial law $p_0^\y$.
\hfill$\square$

\subsection{Proof of Proposition~\ref{prop:dichotomy}}

Fix a bounded, $C^1$ terminal weight $g(\x_1)=p(\y\mid\x_1)$ with $g\ge c>0$ on the
relevant compact set (so $\log g$ is well defined and Lipschitz), and let $\Phi$ be a
$C^1$ flow (Assumption~\ref{ass:lip}). Consider the reference SDE with small noise
$d\x_t=\vv_t(\x_t)\,dt+\epsilon\,dW_t$ and its endpoint-conditioned (Doob) bridge,
whose drift is $\vv_t+\epsilon^2\nabla\log h_t^\epsilon$ with the harmonic function
\begin{equation}
h_t^\epsilon(\x)=\E\big[g(\x_1^\epsilon)\,\big|\,\x_t^\epsilon=\x\big],
\label{eq:hfun}
\end{equation}
the expectation taken under the reference SDE. We show the correction vanishes
pointwise.

\paragraph{Step 1 (the harmonic function converges to a point evaluation).}
Let $\x_1^\epsilon$ solve the reference SDE from $\x_t^\epsilon=\x$. By standard
small-noise stability of SDEs (Gr\"onwall on $\E\|\x_1^\epsilon-\Phi_{t\to1}(\x)\|^2
\le C\epsilon^2$ under Assumption~\ref{ass:lip}), $\x_1^\epsilon\to\Phi_{t\to1}(\x)$
in $L^2$, hence in probability, as $\epsilon\to0$. As $g$ is bounded and continuous,
dominated convergence gives $h_t^\epsilon(\x)\to g(\Phi_{t\to1}(\x))=:h_t^0(\x)$.

\paragraph{Step 2 ($C^1$ convergence and a uniform gradient bound).}
Differentiating \eqref{eq:hfun} in $\x$ and using the first-variation (derivative)
flow $J_{t\to1}^\epsilon=\partial\x_1^\epsilon/\partial\x$, which converges to the
deterministic Jacobian $\nabla\Phi_{t\to1}(\x)$ in $L^2$,
\begin{equation}
\nabla h_t^\epsilon(\x)=\E\big[(J_{t\to1}^\epsilon)^\top\,\nabla g(\x_1^\epsilon)\big]
\;\xrightarrow[\epsilon\to0]{}\;
\big(\nabla\Phi_{t\to1}(\x)\big)^\top\nabla g(\Phi_{t\to1}(\x))=\nabla h_t^0(\x),
\end{equation}
which is finite and bounded on compacts (product of bounded $C^1$ quantities). Since
also $h_t^\epsilon\to h_t^0\ge c>0$, the logarithmic gradient is uniformly bounded for
small $\epsilon$: $\sup_{\epsilon\le\epsilon_0}\|\nabla\log h_t^\epsilon(\x)\|
=\sup_\epsilon\|\nabla h_t^\epsilon/h_t^\epsilon\|\le M(\x)<\infty$.

\paragraph{Step 3 (the drift vanishes; the source carries the conditioning).}
Therefore the Doob correction satisfies
$\|\epsilon^2\nabla\log h_t^\epsilon(\x)\|\le \epsilon^2 M(\x)\to0$ pointwise. At the
same time the bridge's initial law is the reference initial law reweighted by
$h_0^\epsilon$, $p_0^{\y,\epsilon}\propto h_0^\epsilon\,p_0\to g(\Phi(\cdot))\,p_0
=p(\y\mid\Phi(\cdot))\,p_0\propto p_0^\y$. Passing to the limit, the conditioned
process is the deterministic flow $\Phi$ started from $p_0^\y$ with \emph{zero} added
drift, matching Proposition~\ref{prop:invariant}. \hfill$\square$

\paragraph{Remark (the tempered-likelihood regime).} Step~2's $O(1)$ gradient bound
uses that $g$ is fixed in $\epsilon$. If instead the likelihood is tempered at the
noise scale, $g^\epsilon=\exp(-\ell_\y\!\circ\Phi_{t\to1}/\epsilon^2)$ (the
large-deviation scaling of Appendix~\ref{app:ldp-ext}), then $\log h_t^\epsilon\asymp
-S_t/\epsilon^2$ for an action $S_t$ and $\epsilon^2\nabla\log h_t^\epsilon\to-\nabla
S_t\neq0$: the correction converges to the Freidlin--Wentzell minimum-action drift
rather than to zero. The two regimes are consistent---both reach $p(\cdot\mid\y)$---
and correspond exactly to the Section~\ref{sec:exact} (source reweighting, no drift)
vs.\ Section~\ref{sec:unify} (start from $p_0$, pay an action) dichotomy.

\subsection{Exact reweighting of the marginal path and proof of
Proposition~\ref{thm:approx}}

We first record the deterministic collapse of the $h$-function that distinguishes
the flow case from the diffusion case.

\begin{proposition}[Deterministic reweighting]
\label{prop:reweight}
For every $t$, the posterior marginal equals the prior marginal reweighted by a
\emph{pointwise} likelihood,
\begin{equation}
p_t^\y(\x)=p_t(\x)\,\frac{p\big(\y\mid\Phi_{t\to1}(\x)\big)}{p(\y)} .
\label{eq:reweight}
\end{equation}
\end{proposition}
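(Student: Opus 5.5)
The plan is to obtain \eqref{eq:reweight} by a pushforward computation at an intermediate time, reusing the change-of-variables argument from the proof of Lemma~\ref{thm:exact}. Here $p_t^\y$ is understood as $(\Phi_{0\to t})_\#p_0^\y$, the posterior path of Proposition~\ref{prop:invariant}. The identity to exploit is the flow composition $\Phi=\Phi_{t\to1}\circ\Phi_{0\to t}$. Under Assumption~\ref{ass:lip}, each $\Phi_{s\to t}$ is a $C^1$ diffeomorphism with inverse $\Phi_{t\to s}$. As a result, $p_0^\y(\z)\propto p(\y\mid\Phi_{t\to1}(\Phi_{0\to t}(\z)))\,p_0(\z)$, so the weight on the source is already a function of the time-$t$ position $\Phi_{0\to t}(\z)$.

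Concretely, fix a bounded test function $g$ and compute $\E_{\z\sim p_0^\y}[g(\Phi_{0\to t}(\z))]=\frac{1}{Z}\int g(\Phi_{0\to t}(\z))\,p(\y\mid\Phi_{t\to1}(\Phi_{0\to t}(\z)))\,p_0(\z)\,d\z$. Substitute $\x=\Phi_{0\to t}(\z)$ and use the pushforward density formula $p_t(\x)=p_0(\Phi_{t\to0}(\x))\,|\det\nabla\Phi_{t\to0}(\x)|$. The integral then becomes $\frac1Z\int g(\x)\,p(\y\mid\Phi_{t\to1}(\x))\,p_t(\x)\,d\x$. The normalizer is $Z=\int p(\y\mid\Phi(\z))p_0(\z)d\z=p(\y)$, exactly as in the proof of Lemma~\ref{thm:exact}. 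Since $g$ is arbitrary, this identifies the density of $p_t^\y$ as $p_t(\x)\,p(\y\mid\Phi_{t\to1}(\x))/p(\y)=r_t(\x)\,p_t(\x)$.

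Two consistency checks are worth recording. At $t=1$ the identity reduces to Bayes' rule, and at $t=0$ to the definition \eqref{eq:source-posterior}. One can also reach the same result without test functions: write $p_t^\y=r_tp_t$, note that $r_t=r_1\circ\Phi_{t\to1}$ satisfies $\partial_tr_t+\vv_t\cdot\nabla r_t=0$, and combine this with the continuity equation for $p_t$. That recovers \eqref{eq:posterior-transport}, so the result also supplies the missing step in Proposition~\ref{prop:invariant}. Finally, I would contrast this with the diffusion case. There the weight is $h_t(\x)=\E[p(\y\mid\x_1)\mid\x_t=\x]$, and determinism of $\Phi_{t\to1}$ is exactly what collapses this conditional expectation to a point evaluation, as in Step~1 of the proof of Proposition~\ref{prop:dichotomy} with $\epsilon=0$.

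The computation itself is routine. The only delicate point is justifying the change of variables and the interchange of integrals, which requires $\Phi_{0\to t}$ to be a global diffeomorphism of $\R^d$ with the composition law holding. These follow from global Lipschitz continuity of $\vv_t$ via Cauchy--Lipschitz, together with boundedness of the likelihood (so that $Z<\infty$). I would state these explicitly, and handle the case $p(\y)=0$ by assuming $p(\y)>0$.
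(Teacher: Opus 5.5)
Your proof is correct, but it takes a different route from the paper's.

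\textbf{The paper's route.} The paper argues directly at time $t$ by Bayes' rule. Under the prior, $\y$ depends on $\x_t$ only through $\x_1=\Phi_{t\to1}(\x_t)$, so the likelihood of $\x_t$ is $p(\y\mid\Phi_{t\to1}(\x_t))$. The conditional law of $\x_t$ given $\y$ is therefore $p_t$ reweighted by it, and the diffusion $h$-function collapses because $\x_1\mid\x_t$ is a Dirac mass. This never returns to the source and uses no Jacobians. However, it tacitly reads $p_t^\y$ as the conditional law of $\x_t$ given $\y$, whereas the proof of Proposition~\ref{prop:invariant} defines $p_t^\y$ as $(\Phi_{0\to t})_\#p_0^\y$.

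\textbf{Your route and what it buys.} You work with the pushforward definition, use $\Phi=\Phi_{t\to1}\circ\Phi_{0\to t}$, and rerun the change of variables of Lemma~\ref{thm:exact} at time $t$. This gives exactly the reconciliation the paper leaves implicit: the path transported by the unmodified velocity is $r_t p_t$. As you note, that is the unproved half of Proposition~\ref{prop:invariant}.

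\textbf{Removing the delicate step.} You flag the change of variables at intermediate times as the one delicate point. Note that Assumption~\ref{ass:lip} only makes $\Phi=\Phi_{0\to1}$ a $C^1$-diffeomorphism, while Lipschitz $\vv_t$ gives bi-Lipschitz intermediate maps. You can avoid this step entirely. For any measurable $T$, weight $w\ge 0$ and measure $\mu$, the test-function definition gives $T_\#\big((w\circ T)\mu\big)=w\,T_\#\mu$. Take $T=\Phi_{0\to t}$, $w=r_t$ and $\mu=p_0$. The composition law together with $Z=p(\y)$ shows $(r_t\circ\Phi_{0\to t})\,p_0=p_0^\y$. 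Hence $(\Phi_{0\to t})_\#p_0^\y=r_t\,p_t$, with no invertibility or Jacobian required.

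\textbf{The PDE side-route.} Your PDE argument asks for more than the test-function argument does. It needs $r_t$, and hence the likelihood, to be differentiable, which the assumption does not give. It also needs a uniqueness theorem for the continuity equation. It works better as a consistency check than as an independent proof.
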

\begin{proof}
Under the prior flow, the terminal state is the deterministic function
$\x_1=\Phi_{t\to1}(\x_t)$ of $\x_t$. Hence conditioning the endpoint on $\y$
reweights the law of $\x_t$ by $p(\y\mid\x_1=\Phi_{t\to1}(\x_t))$:
$p_t^\y(\x)\propto p_t(\x)\,p(\y\mid\Phi_{t\to1}(\x))$, and normalizing by
$p(\y)$ gives \eqref{eq:reweight}. The conditional expectation defining the
diffusion $h$-function, $h_t(\x)=\E[p(\y\mid\x_1)\mid\x_t=\x]$, collapses to a
point evaluation because $\x_1\mid\x_t$ is a Dirac mass.
\end{proof}

Equation~\eqref{eq:reweight} defines the \emph{instantaneous likelihood score}
\begin{equation}
\s_t(\x):=\nabla_\x\log p\big(\y\mid\Phi_{t\to1}(\x)\big)
=\big(\nabla_\x\Phi_{t\to1}(\x)\big)^\top \nabla_{\x_1}\log p(\y\mid\x_1)\big|_{\x_1=\Phi_{t\to1}(\x)} .
\label{eq:score}
\end{equation}

\paragraph{The canonical correction.} To reach $p(\cdot\mid\y)$ when starting from
the \emph{unconditional} $p_0$ (rather than $p_0^\y$), one integrates
$\dot\x=\vv_t+\uu_t$ with marginals $q_t^{\uu}$, $q_0=p_0$, and requires
$q_1^{\uu}=p(\cdot\mid\y)$. Among all admissible $\uu$, the minimum-kinetic-energy
field \eqref{eq:ustar} exists and is a gradient field
$\uu_t^\star=\nabla\phi_t$, where $(q_t^\star,\phi_t)$ solve the Benamou--Brenier
optimality system relative to the reference drift $\vv_t$,
\begin{equation}
\partial_t q_t^\star+\nabla\!\cdot\!\big(q_t^\star(\vv_t+\nabla\phi_t)\big)=0,\qquad
\partial_t\phi_t+\vv_t\!\cdot\!\nabla\phi_t+\tfrac12\|\nabla\phi_t\|^2=0,
\label{eq:bb}
\end{equation}
with boundary data $q_0^\star=p_0$, $q_1^\star=p(\cdot\mid\y)$
\citep{benamou2000computational}. This is the deterministic, mobility-Euclidean
analogue of the Doob/F\"ollmer drift; the existence and gradient form follow from
the dynamic-OT formulation under Assumption~\ref{ass:lip}.

\paragraph{Existing solvers as approximations.} Trajectory-guidance solvers use the
\emph{greedy} correction aligned with the instantaneous likelihood score
\eqref{eq:score}, with two approximations:
\begin{enumerate}[leftmargin=1.5em,itemsep=0.1em]
\item \textbf{Endpoint map.} They replace the exact endpoint map $\Phi_{t\to1}(\x)$
by its one-step (Tweedie) linearization $\hat\x_1(\x,t)=\x+(1-t)\vv_t(\x)$, i.e.\
$\hat\x_1=\x+(1-t)\vv_t(\x)=\Phi_{t\to1}(\x)+O\!\big((1-t)^2\big)$, so that
$\s_t(\x)\approx\nabla_\x\log p(\y\mid\hat\x_1(\x,t))$.
\item \textbf{Mobility and greediness.} They scale this score by a mobility $M_t$
and follow it greedily rather than solving \eqref{eq:bb}:
\emph{FlowDPS} takes $M_t=\zeta_t I$ scalar (point-mass endpoint);
\emph{FLOWER} models $\x_1\mid\x_t\approx\mathcal N(\hat\x_1,\sigma_{r,t}^2 I)$ and
sets $M_t$ to the induced Gaussian posterior covariance, solved by conjugate
gradients ($\Pi$GDM-type);
\emph{PnP-Flow} replaces the gradient step by a data-fidelity proximal map and a
flow-path reprojection (operator splitting).
\end{enumerate}
Both approximations are exact only in degenerate cases (linear $\Phi_{t\to1}$;
Gaussian endpoint posterior; vanishing transport cost), so in general
$\uu_t\neq\uu_t^\star$ and a posterior bias remains, quantified next.
\hfill$\square$

\subsection{Proof of Theorem~\ref{thm:bound}}

Let $w_t=\vv_t+\uu_t^\star$ and $w_t'=\vv_t+\uu_t$ be the exact and approximate
total velocities, generating flows $\Psi_{0\to t}$ and $\Psi'_{0\to t}$ from the
shared initial law $p_0$. Couple trajectories by the same initial point
$\x_0\sim p_0$ and write $\x_t=\Psi_{0\to t}(\x_0)$, $\x_t'=\Psi'_{0\to t}(\x_0)$.
Then
\begin{equation}
\tfrac{d}{dt}\|\x_t-\x_t'\|
\le \|w_t(\x_t)-w_t(\x_t')\| + \|w_t(\x_t')-w_t'(\x_t')\|
\le L\|\x_t-\x_t'\| + \|\uu_t^\star(\x_t')-\uu_t(\x_t')\|,
\end{equation}
using Assumption~\ref{ass:lip}. Gr\"onwall's inequality with $\x_0=\x_0'$ gives
$\|\x_1-\x_1'\|\le\int_0^1 e^{L(1-s)}\|\uu_s^\star(\x_s')-\uu_s(\x_s')\|\,ds$.
Taking the $L^2(p_0)$ norm, using the coupling as an admissible plan for $W_2$ and
Minkowski's integral inequality,
\begin{equation}
W_2\big(\Psi_{0\to1\,\#}p_0,\;\Psi'_{0\to1\,\#}p_0\big)
\le e^{L}\!\int_0^1\!\big(\E_{q_s^{\uu}}\|\uu_s-\uu_s^\star\|^2\big)^{1/2}\,ds .
\end{equation}
Since $\Psi_{0\to1\,\#}p_0=p(\cdot\mid\y)$ exactly and
$\Psi'_{0\to1\,\#}p_0=p_1^{\uu}$ in continuous time with exact $\vv_t$, adding the
ODE-discretization error $\epsilon_{\mathrm{disc}}$ (from finite step size) and the
flow-model error $\epsilon_{\mathrm{flow}}$ (from $\hat\vv_t\neq\vv_t$) by the
triangle inequality yields \eqref{eq:bound} with $L\!\leftarrow\!e^{L}$.
\hfill$\square$

\paragraph{Remark (non-vanishing floor).} If the correction has a mis-specified
direction or mobility---e.g.\ a scalar $M_t$ where $\uu^\star$ requires an
anisotropic one, or a greedy field where transport requires a global
rerouting---then $\inf_{c>0}\E\|c\,\uu_t-\uu_t^\star\|^2>0$, so no scalar rescaling
of the guidance drives the bound to zero. This is the plateau observed in
Table~\ref{tab:toy}.

\subsection{Proof of Proposition~\ref{prop:ldp}}
The source posterior is the Gibbs measure
$p_0^{\y,\epsilon}\propto e^{-I_\y/\epsilon}$ with $I_\y$ in \eqref{eq:rate}
continuous and, by coercivity of $I_0$, with compact sublevel sets (a good rate
function). By Varadhan's lemma / the Laplace principle
\citep{dembo2009large,dupuis2011weak}, $\{p_0^{\y,\epsilon}\}$ satisfies an LDP with
rate $I_\y$, so for any closed $F$ and open $G$,
$\limsup_\epsilon \epsilon\log p_0^{\y,\epsilon}(F)\le-\inf_F I_\y$ and
$\liminf_\epsilon \epsilon\log p_0^{\y,\epsilon}(G)\ge-\inf_G I_\y$; mass thus
concentrates on $\arg\min I_\y=\{\z_k^\star\}$. Around each nondegenerate minimizer a
second-order (Laplace) expansion gives the stated normalized weights
$w_k\propto e^{-I_\y(\z_k^\star)/\epsilon}|\nabla^2 I_\y(\z_k^\star)|^{-1/2}$.
Finally $\Phi$ is a continuous bijection, so the contraction principle
\citep[Thm.~4.2.1]{dembo2009large} applied to $\x=\Phi(\z)$ yields an LDP for
$\Phi_\#p_0^{\y,\epsilon}$ with rate
$J_\y(\x)=\inf_{\z:\Phi(\z)=\x}I_\y(\z)=I_\y(\Phi^{-1}(\x))$, the last equality by
injectivity of $\Phi$. \hfill$\square$

\section{Extended large-deviation analysis}
\label{app:ldp-ext}

This appendix develops the small-noise picture of Section~\ref{sec:ldp} in more
detail. We keep the scaling of \eqref{eq:rate}: observation
$\y=A(\x)+\sqrt\epsilon\,\boldsymbol\eta$ and tempered source
$p_0\propto e^{-I_0/\epsilon}$, so the source posterior is
$p_0^{\y,\epsilon}\propto e^{-I_\y/\epsilon}$ with
$I_\y(\z)=I_0(\z)+\ell_\y(\Phi(\z))-\min I_\y$, $\ell_\y(\x)=\tfrac12\|A(\x)-\y\|^2$,
and minimizers $\{\z_k^\star\}_{k=1}^K$ with Hessians $H_k=\nabla^2 I_\y(\z_k^\star)$.

\subsection{Why importance sampling collapses: exponential ESS decay}

Consider self-normalized importance sampling of $p_0^{\y,\epsilon}$ with proposal
$p_0$ and weights $w(\z)=p(\y\mid\Phi(\z))=e^{-\ell_\y(\Phi(\z))/\epsilon}$. The
standard normalized effective sample size is
$\mathrm{ESS}/N=\big(\E_{p_0}w\big)^2/\E_{p_0}w^2=1/\big(1+\chi^2(p_0^{\y,\epsilon}\,\|\,p_0)\big)$.

\begin{proposition}[Exponential ESS decay]
\label{prop:ess}
Under the above scaling with a tempered prior of the same order,
$\E_{p_0}w^{m}\asymp e^{-\frac{1}{\epsilon}\inf_\z[\,I_0(\z)+m\,\ell_\y(\Phi(\z))\,]}$
by Laplace's method, so
\begin{equation}
\epsilon\log\frac{\mathrm{ESS}}{N}\;\xrightarrow[\epsilon\to0]{}\;
-\,\Big(\,2\,m_1-m_2-m_0\,\Big)\;=:-\Delta\le 0,
\end{equation}
where $m_j=\inf_\z[I_0(\z)+j\,\ell_\y(\Phi(\z))]$ for $j\in\{0,1,2\}$. Whenever the
likelihood is informative ($\ell_\y\circ\Phi$ non-constant on the prior's typical
set), $\Delta>0$ by strict convexity of $j\mapsto m_j$, so $\mathrm{ESS}/N$ decays
\emph{exponentially} in $1/\epsilon$ (equivalently, in problem dimension at fixed
per-coordinate noise). This is the large-deviation form of the statement that naive
source reweighting is exact but intractable in high dimension.
\end{proposition}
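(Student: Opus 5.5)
The plan is to compute the first and second moments of the importance weight by Laplace's method and substitute them into the ESS formula. Under the tempered scaling, $p_0(\z)=e^{-I_0(\z)/\epsilon}/Z_0^\epsilon$ with $\epsilon\log Z_0^\epsilon\to -m_0$, where $m_0=\inf I_0$. The weight is $w=e^{-\ell_\y(\Phi(\z))/\epsilon}$, so for $m\in\{1,2\}$
\begin{equation}
\E_{p_0}w^m=\frac{1}{Z_0^\epsilon}\int e^{-[I_0(\z)+m\,\ell_\y(\Phi(\z))]/\epsilon}\,d\z .
\end{equation}

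First I apply Varadhan's lemma (the Laplace principle already used in the proof of Proposition~\ref{prop:ldp}). I use the integrand with $I_0+m\,\ell_\y\circ\Phi$ continuous, bounded below, and with compact sublevel sets; coercivity of $I_0$ together with $\ell_\y\ge0$ gives this. The conclusion is
\begin{equation}
\epsilon\log\int e^{-[I_0+m\,\ell_\y\circ\Phi]/\epsilon}\to -m_m ,
\end{equation}
and hence $\epsilon\log\E_{p_0}w^m\to-(m_m-m_0)$. This is the claimed $\asymp$ statement at the exponential scale. The subexponential Hessian prefactors are $\epsilon$-independent powers and vanish after multiplying by $\epsilon$.

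Second, I insert the moments into $\mathrm{ESS}/N=(\E w)^2/\E w^2$. This gives
\begin{equation}
\epsilon\log(\mathrm{ESS}/N)\to -2(m_1-m_0)+(m_2-m_0)=-(2m_1-m_2-m_0)=-\Delta .
\end{equation}
The inequality $\Delta\ge0$ follows because $\mathrm{ESS}/N\le1$ by Cauchy--Schwarz. Alternatively, $j\mapsto m_j$ is an infimum of affine functions of $j$ and is therefore concave, so $m_1\ge\tfrac12(m_0+m_2)$.

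Third, and this is the main obstacle, I prove strict positivity of $\Delta$. I note that the paper's phrase ``strict convexity'' should read concavity. Equality $m_1=\tfrac12(m_0+m_2)$ in a concave infimum forces a common minimizer $\z^\dagger$ of the three affine-in-$j$ families. Specifically, $\z_1^\star$ attaining $m_1$ would have to attain $m_0$ and $m_2$ as well, since
\begin{equation}
m_0+m_2\le [I_0+0\cdot\ell](\z_1^\star)+[I_0+2\ell](\z_1^\star)=2m_1 .
\end{equation}
Equality then requires $I_0(\z_1^\star)=m_0$, so $\z_1^\star$ is a prior mode, and it requires $\ell_\y(\Phi(\z_1^\star))=\min$ over the relevant set. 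I will therefore interpret ``informative'' as: $\ell_\y\circ\Phi$ is not minimized, relative to its value gap, at every prior mode. Concretely, I assume $\ell_\y\circ\Phi$ is non-constant near $\arg\min I_0$, with $\ell_\y$ normalized so its infimum is $0$ and not attained at the prior mode. Under this assumption the inequality is strict and $\Delta>0$.

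For attainment of the infima I will use good-rate-function compactness. The dimension remark I will treat only heuristically: at fixed per-coordinate noise, $1/\epsilon$ plays the role of $d$.
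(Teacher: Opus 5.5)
Your first two steps are the paper's own argument. The paper gives no separate proof; the statement itself is the sketch: Laplace's method on $\E_{p_0}w^m$, then substitution into $(\E w)^2/\E w^2$. You carry both steps out correctly, including the $Z_0^\epsilon$ normalization. Your correction is also right: $j\mapsto m_j$ is concave (an infimum of affine functions), and that is what actually gives $\Delta\ge0$.

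The gap is in your third step. Your equality analysis is correct: $\Delta=0$ forces a minimizer $\z_1^\star$ of $I_0+\ell_\y\circ\Phi$ to minimize both $I_0$ and $I_0+2\,\ell_\y\circ\Phi$. But the second condition does \emph{not} say that $\ell_\y\circ\Phi$ attains its global infimum at $\z_1^\star$. It only says that leaving the prior mode for a better-fitting region costs more prior action than it saves in likelihood. Hence your concrete hypothesis does not imply $\Delta>0$; that hypothesis is that $\ell_\y\circ\Phi$ is non-constant near $\arg\min I_0$, with $\inf\ell_\y=0$ not attained at the prior mode. Counterexample: take $d=1$, $\Phi=\mathrm{id}$, $I_0(z)=z^2/2$ and $\ell_\y(z)=\min\{1+z^2/2,\,(z-10)^2/2\}$.
\begin{itemize}
\item Your hypothesis holds, yet $m_0=0$, $m_1=1$, $m_2=2$. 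The far branch only reaches $25$ and $100/3$ for $I_0+\ell_\y$ and $I_0+2\ell_\y$ respectively, so $\Delta=0$.
\item Directly: near $z=0$ the weight is $e^{-1/\epsilon}e^{-z^2/(2\epsilon)}$. The constant factor cancels in the ESS ratio, and $\mathrm{ESS}/N\to\sqrt3/2$.
\end{itemize}

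The criterion your own equality analysis produces is the right one. Assuming the infima are attained, $\Delta>0$ if and only if no minimizer of $I_0$ also minimizes $I_0+2\,\ell_\y\circ\Phi$. For the converse, a common minimizer $\z^\star$ gives $m_1\le I_0(\z^\star)+\ell_\y(\Phi(\z^\star))=\tfrac12(m_0+m_2)$. A checkable sufficient condition is that $I_0$ is $C^1$ and $\nabla(\ell_\y\circ\Phi)\neq0$ at every prior mode; then no prior mode is even a critical point of $I_0+2\,\ell_\y\circ\Phi$. So ``informative'' has to mean that the likelihood \emph{moves} the prior mode, not merely that it is non-constant or poorly fitted there.

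The paper's stated criterion (non-constancy on the typical set, plus ``strict convexity'') fails for the same reason. For example, $I_0=\ell_\y=z^2/2$ gives $m_j\equiv0$ and a constant $\mathrm{ESS}/N=\sqrt3/2$. You were right to distrust that criterion, but your replacement must be strengthened as above.

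A minor point: the Laplace prefactors are powers of $\epsilon$ (such as $\epsilon^{d/2}$), not $\epsilon$-independent. They still vanish at the $\epsilon\log$ scale, where Varadhan's lemma does not need them at all.
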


We verify this directly (Table~\ref{tab:essdim}): for a Gaussian flow prior with a
random linear measurement observing $d/4$ coordinates at fixed noise $\sigma{=}0.3$,
the importance-sampling ESS/$N$ collapses from $0.97$ at $d{=}2$ to the floor
$\approx\!1/N$ by $d{=}128$, and the IS posterior-mean estimate degrades from
$3\%$ relative error to \emph{worse than the prior mean}. Exact source reweighting is
thus usable only at very low dimension---hence the need for an amortized solver at
image scale.

\begin{table}[h]
\centering
\small
\caption{\textbf{Exact source importance sampling collapses with dimension}
(Gaussian flow, observe $d/4$ coords, $\sigma{=}0.3$, $N{=}4{\times}10^4$ proposals).
ESS$/N\to1/N$ and the IS posterior-mean error exceeds $1$ (worse than the prior).}
\label{tab:essdim}
\begin{tabular}{lcccccc}
\toprule
dimension $d$ & 2 & 8 & 32 & 128 & 512 & 2048 \\
\midrule
ESS$/N$ & 0.97 & $3.2\!\times\!10^{-3}$ & $3.2\!\times\!10^{-4}$ & $2.5\!\times\!10^{-5}$ & $2.5\!\times\!10^{-5}$ & $2.5\!\times\!10^{-5}$ \\
rel.\ error of IS mean & 0.03 & 0.06 & 0.83 & 1.89 & 1.97 & 2.23 \\
\bottomrule
\end{tabular}
\end{table}

\subsection{Source reweighting preserves Laplace weights; greedy guidance does not}

\begin{proposition}[Basin-weight preservation]
\label{prop:basin}
As $\epsilon\to0$, both the true posterior and exact source reweighting (importance
resampling) converge to the same categorical law over basins,
$\sum_k w_k\,\delta_{\x_k^\star}$ with $\x_k^\star=\Phi(\z_k^\star)$ and
$w_k\propto e^{-I_\y(\z_k^\star)/\epsilon}|H_k|^{-1/2}$. Source reweighting therefore
preserves the relative basin (mode) weights by construction.
\end{proposition}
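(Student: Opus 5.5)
The argument has two parts. The first shows that the true posterior and the exact source posterior are the same object, so the question reduces to a Laplace asymptotic on the source. The second shows that importance resampling from $p_0$ converges to that same object.

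\emph{Part one.} By Lemma~\ref{thm:exact}, the true posterior at noise level $\epsilon$ equals $\Phi_\#p_0^{\y,\epsilon}$, with $p_0^{\y,\epsilon}\propto e^{-I_\y/\epsilon}$. So it suffices to study the weak limit of $p_0^{\y,\epsilon}$ and then push it through the continuous map $\Phi$. Weak convergence is preserved under continuous pushforward, so $\delta_{\z_k^\star}$ maps to $\delta_{\x_k^\star}$ with the same weights.

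For the source limit I would follow the proof of Proposition~\ref{prop:ldp}. Fix disjoint small balls $B_k$ around the isolated, nondegenerate minimizers $\z_k^\star$. By the LDP upper bound, the complement of $\bigcup_k B_k$ carries mass $e^{-\delta/\epsilon}$ relative to the minimum level, which is negligible. Inside each $B_k$, a second-order Taylor expansion of $I_\y$ plus dominated convergence gives
\begin{equation}
\int_{B_k}e^{-I_\y/\epsilon}\,d\z=e^{-I_\y(\z_k^\star)/\epsilon}\,(2\pi\epsilon)^{d/2}|H_k|^{-1/2}\,(1+o(1)).
\end{equation}
The common factor $(2\pi\epsilon)^{d/2}$ cancels on normalization, which leaves $w_k\propto e^{-I_\y(\z_k^\star)/\epsilon}|H_k|^{-1/2}$. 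The ratios $\epsilon^{-1}$ times the differences in $I_\y(\z_k^\star)$ mean that only basins at the global minimum level survive as $\epsilon\to0$. Because $I_\y$ is normalized so that $\min I_\y=0$ is attained at every listed $\z_k^\star$, the exponential factors are all equal to one. The weights are then determined by the Hessian prefactors, together with the $p_0(\z_k^\star)$ prefactor in the fixed-prior version of Proposition~\ref{prop:ldp}.

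\emph{Part two.} For importance resampling, I draw $\z^{(i)}\sim p_0$, weight by $e^{-\ell_\y(\Phi(\z))/\epsilon}$, resample, and map through $\Phi$. For fixed $\epsilon$, the self-normalized empirical measure converges weakly to $p_0^{\y,\epsilon}$ almost surely as $N\to\infty$. This follows from the SLLN applied to numerator and denominator, using bounded weights (Assumption~\ref{ass:lip}) and $Z>0$. Taking $N\to\infty$ first and then $\epsilon\to0$ reduces the question to the source-limit step above, which gives the same categorical law. I would state this order of limits explicitly: Proposition~\ref{prop:ess} shows that letting $\epsilon\to0$ at fixed $N$ fails, because the ESS collapses.

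\emph{Main obstacle.} The hardest step is making the Laplace step uniform and rigorous. It needs three things: (i) a uniform Taylor remainder on each $B_k$; (ii) control of the tails outside a compact set, which I would get from coercivity of $I_0$ (the prior is Gaussian); and (iii) a clear handling of the level issue, namely that weights of non-global minima vanish. I would address (iii) first by stating explicitly that the $\z_k^\star$ are the global minimizers, so that "preserves relative basin weights" refers to those basins.
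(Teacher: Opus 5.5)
Your proposal is correct and follows the paper's route. The paper's one-line proof observes that exact resampling reproduces $p_0^{\y,\epsilon}$, whose $\epsilon\to0$ Laplace limit and pushforward through $\Phi$ were already obtained in Proposition~\ref{prop:ldp}, and your argument does the same, but it also makes explicit two points the paper leaves implicit: that $N\to\infty$ must be taken before $\epsilon\to0$ (consistent with Proposition~\ref{prop:ess}), and that only global minimizers of $I_\y$ keep weight, so the exponential factors in $w_k$ are trivial in the limit.
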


\begin{proposition}[Greedy guidance collapses to a single basin]
\label{prop:collapse}
Consider the idealized likelihood-ascent guidance that follows the instantaneous
score $\nabla_\x\log r_t$ along a single trajectory initialized at $\z\sim p_0$. In
the small-noise limit its endpoint is the gradient-flow limit of $-\ell_\y\circ
\Phi_{t\to1}$, hence lands in the single basin $k(\z)$ whose region of attraction
contains $\z$. Consequently its terminal law is a \emph{single-basin} measure
$\delta_{\x_{k^\star}^\star}$ (for the dominant/attracting $k^\star$), and for any
guidance strength $c>0$,
\begin{equation}
\mathrm{TV}\big(\text{guidance},\,p(\cdot\mid\y)\big)\;\ge\;1-\max_k w_k ,
\label{eq:tvfloor}
\end{equation}
which is bounded away from $0$ for every genuinely multimodal posterior ($K\ge2$,
$\max_k w_k<1$) and is \emph{independent of $c$}.
\end{proposition}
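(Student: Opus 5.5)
The argument has two parts: first locate the endpoint of the idealized greedy dynamics deterministically, then compare the resulting law with the limit posterior from Proposition~\ref{prop:basin}.

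\textbf{Step 1: the endpoint is a deterministic basin point.} Work along one characteristic in source coordinates. By Proposition~\ref{prop:reweight} and \eqref{eq:score}, $\nabla\log r_t=\s_t$ is the pullback under $\Phi_{t\to1}$ of $-\nabla\ell_\y/\epsilon$. Pulling everything back by $\Phi_{0\to t}$ turns the guided ODE into the following:
\begin{itemize}
\item the reference motion along $\vv_t$ (which is absorbed by the change of coordinates), plus
\item a time-rescaled gradient flow of $-\ell_\y\circ\Phi$ in $\z$, with a metric induced by the Jacobians, and speed $c/\epsilon$.
\end{itemize}
As $\epsilon\to0$ (or $c/\epsilon\to\infty$), I would use a standard singular-perturbation/Tikhonov argument. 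The fast gradient flow converges to a critical point of $\ell_\y\circ\Phi$. Nondegeneracy and isolatedness of the minimizers $\{\z_k^\star\}$ give convergence to the minimizer $\z_{k(\z)}^\star$ whose basin of attraction contains $\z$. Saddles are excluded for $p_0$-a.e.\ $\z$, since stable manifolds of saddles have measure zero. Pushing forward by $\Phi$ gives the endpoint $\x_{k(\z)}^\star$. Hence the terminal law is supported on $\{\x_k^\star\}$ and carries \emph{no} mass spread within a basin.

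\textbf{Step 2: the TV bound.} The idealized caricature that the statement analyzes is a single trajectory. Taking that statement's reading literally, the terminal law is the Dirac mass $\delta_{\x^\star_{k^\star}}$. For a Dirac $\delta_a$ and any law $\pi$, $\mathrm{TV}(\delta_a,\pi)=1-\pi(\{a\})$. Comparing with the limit law of Proposition~\ref{prop:basin} gives $\mathrm{TV}\ge 1-w_{k^\star}\ge1-\max_k w_k$.

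For finite $\epsilon$ the posterior is not atomic, so I would compare on neighborhoods instead. Take disjoint balls $B_k\ni\x_k^\star$. The guided law puts mass $\to1$ on $B_{k^\star}$, while the posterior puts mass $\to w_{k^\star}$ there. Then $\mathrm{TV}\ge|P_{\rm guid}(B_{k^\star})-P_{\rm post}(B_{k^\star})|$, and the bound follows in the limit.

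I would also record the more honest general version, for initial points spread over several basins. Then the guided law is $\sum_k \beta_k\delta_{\x_k^\star}$ with $\beta_k=p_0(\text{basin}_k)$. The same ball argument gives $\mathrm{TV}\to\tfrac12\sum_k|\beta_k-w_k|$. This quantity is generally nonzero because $\beta_k$ is a basin volume and not a Laplace weight $\propto e^{-I_\y/\epsilon}|H_k|^{-1/2}$.

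\textbf{Step 3: independence of $c$.} The scalar $c$ only reparametrizes time along the gradient-flow part. It therefore changes neither the basins of attraction nor their limit points, so neither $k(\z)$ nor the bound depends on $c$.

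\textbf{Main obstacle.} The hard step is Step 1, because the guided dynamics is non-autonomous: the potential $\ell_\y\circ\Phi_{t\to1}$ is transported by $\vv_t$, and the basins can move during $[0,1]$. I would first try to show the basin decomposition is invariant under the coordinate change $\Phi_{0\to t}$, which is a diffeomorphism, so that in source coordinates the potential $\ell_\y\circ\Phi$ is fixed. The only remaining time dependence is then in the metric $(\nabla\Phi_{0\to t})^{-1}(\nabla\Phi_{0\to t})^{-\top}$. A Lyapunov argument with $V=\ell_\y\circ\Phi$, which is nonincreasing under any positive-definite metric, should then control which minimizer the trajectory converges to.

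A second concern is that the statement's claim of a single dominant $k^\star$ holds only if all of $p_0$ lies in one attracting basin. I would state the bound under that reading and present the mixture formula above as the general case.
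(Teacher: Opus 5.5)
Your skeleton matches the paper's proof: each initialization is carried by the descent to one basin minimizer, $\mathrm{TV}(\delta_a,\pi)=1-\pi(\{a\})$ is taken against the limiting mixture, and $c$ does not move basins. It also shares the paper's weak point. The passage from ``each $\z$ is sent to $\x^\star_{k(\z)}$'' to ``the terminal law is $\delta_{\x^\star_{k^\star}}$'' is flagged by you and then adopted as a reading. The paper's proof likewise asserts it (``places (asymptotically) all mass on the dominant basin'') rather than derives it. This is a genuine gap.

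With a fixed source $\mathcal N(0,I)$, as in the main-text regime, the step is false under the hypotheses. The basin of a nondegenerate minimizer is open, hence has positive Gaussian mass. So for $K\ge2$ the guided limit is $\sum_k\beta_k\delta_{\x_k^\star}$ with every $\beta_k>0$, and the TV between the weak limits, $\tfrac12\sum_k|\beta_k-w_k|$, can vanish. For example, take $d=1$, $\Phi=\mathrm{id}$, $\ell_\y(x)=\tfrac12(x^2-1)^2$; by symmetry $\beta=w=(\tfrac12,\tfrac12)$.

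What closes the gap is the scaling of Appendix~\ref{app:ldp-ext}, in which the proposition is actually posed. There the source is tempered, $p_0\propto e^{-I_0/\epsilon}$, so the initial law itself concentrates at $\z_0=\arg\min I_0$. If $\z_0$ lies in the interior of a basin $\mathcal B_{k^\star}$, then $p_0(\mathcal B_{k^\star})\to1$. The guided law therefore tends to $\delta_{\x^\star_{k^\star}}$, with $k^\star$ the basin that attracts the source mode. Your ball comparison then gives $\liminf_{\epsilon\to0}\mathrm{TV}\ge1-w_{k^\star}\ge1-\max_k w_k$. If $\x^\star_{k^\star}$, a minimizer of $\ell_\y\circ\Phi$, is not among the posterior atoms (the minimizers of $I_\y=I_0+\ell_\y\circ\Phi$), the same comparison even gives $1$.

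There are two smaller corrections.

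First, your general formula is the TV of the weak limits. TV is only weakly lower semicontinuous, so the ball argument yields $\liminf_{\epsilon\to0}\mathrm{TV}\ge\tfrac12\sum_k|\beta_k-w_k|$, not convergence. In fact, by your own remark that the guided law carries no within-basin spread, all but a vanishing fraction of the guided mass lies within $e^{-\Theta(c/\epsilon)}$ of the minimizers. The posterior has width of order $\sqrt\epsilon$ there, so the finite-$\epsilon$ TV tends to $1$. The inequality therefore survives without a single basin, but through within-basin collapse, not the inter-basin mis-weighting the proposition describes. This route also does not yield the single-basin claim.

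Second, in Step~1 the Lyapunov function $V=\ell_\y\circ\Phi$ only confines the trajectory to a sublevel component. It does not select the minimizer, because for $d\ge2$ basins depend on the metric. Selection comes from time-scale separation. In fast time $\tau=ct/\epsilon$ the metric $(\nabla\Phi_{0\to t})^{-1}(\nabla\Phi_{0\to t})^{-\top}$ is evaluated at $t=\epsilon\tau/c\to0$, where $\nabla\Phi_{0\to0}=I$. The relevant basin is thus that of the Euclidean gradient flow of $\ell_\y\circ\Phi$. A nondegenerate minimizer, once reached, is stable under any positive-definite metric. The same observation is what makes your Step~3 true, and only in the limit: at finite $\epsilon$, changing $c$ is not a pure time change, since the metric is not rescaled.
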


\noindent\emph{Proof.} Resampling from an exact reweighting reproduces the limiting
mixture, giving Proposition~\ref{prop:basin}. For Proposition~\ref{prop:collapse},
the deterministic descent map sends each initialization to one basin minimizer, so
the pushforward of any initial law is supported on the basin minimizers; a greedy
ascent that does not track inter-basin mass places (asymptotically) all mass on the
dominant basin, and $\mathrm{TV}$ between a point mass at $\x_{k^\star}^\star$ and the
mixture $\sum_k w_k\delta_{\x_k^\star}$ equals $1-w_{k^\star}\ge 1-\max_k w_k$.
Scaling the score by $c$ reparameterizes the descent but not its basin of
attraction, so the bound is $c$-independent. \hfill$\square$

Equation~\eqref{eq:tvfloor} is the asymptotic counterpart of the empirical plateau
in Table~\ref{tab:toy}: the guidance error has a floor set by the posterior's
\emph{basin geometry} ($1-\max_k w_k$), not by a tunable step size. It also predicts
the failure mode sharpens as the posterior becomes more balanced (several comparable
$w_k$), consistent with the multimodal toy.

\subsection{$\uu^\star$ as a Freidlin--Wentzell minimum-action correction}

Finally we make explicit the action interpretation of the canonical corrector. For
the small-noise controlled dynamics $d\x_t=(\vv_t+\uu_t)\,dt+\sqrt\epsilon\,dW_t$, the
Freidlin--Wentzell rate of a path $\gamma$ is
$S[\gamma]=\tfrac12\int_0^1\|\dot\gamma_t-\vv_t(\gamma_t)\|^2\,dt$
\citep{kifer1988random,dupuis2011weak}. The most likely way to transport $p_0$ to
$p(\cdot\mid\y)$ minimizes $\E\,S$ over controls with the prescribed endpoint law,
which is exactly the kinetic-energy objective \eqref{eq:ustar} with
$\uu_t=\dot\gamma_t-\vv_t(\gamma_t)$; its minimizer is the Benamou--Brenier geodesic
relative to $\vv_t$ (Appendix~\ref{app:proofs}). Thus $\uu^\star$ is simultaneously
(i) the minimum-kinetic-energy corrector, (ii) the deterministic ($\epsilon\to0$)
Schr\"odinger-bridge/F\"ollmer drift, and (iii) the Freidlin--Wentzell minimum-action
control from the unconditional source to the posterior---three views of one object.
From the reweighted source $p_0^\y$, the required action is zero
(Proposition~\ref{prop:invariant}); the action is the price of starting from $p_0$.

\section{Additional qualitative results}
\label{app:morefigs}

\begin{figure}[h]
\centering
\includegraphics[width=\linewidth]{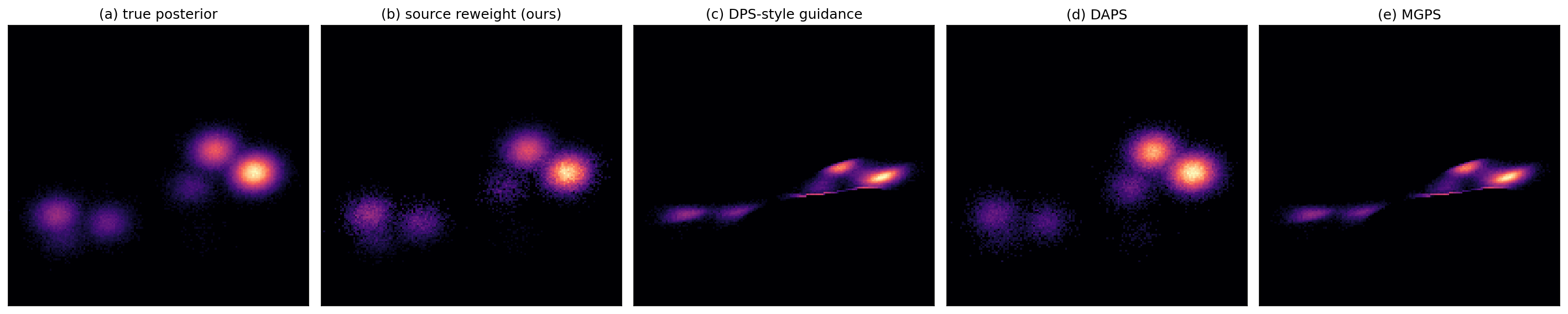}
\caption{\textbf{Full mechanism comparison on the controlled $2$D prior} (densities;
companion to Table~\ref{tab:toy}). \textbf{(a)} closed-form true posterior;
\textbf{(b)} source reweighting (ours)---matches (a); \textbf{(c)} DPS-style guidance
collapses onto a measurement-consistent ridge; \textbf{(d)} DAPS partially recovers
the mode structure but mis-weights it; \textbf{(e)} MGPS, like guidance, collapses
toward a ridge. Only source-side reweighting reproduces the posterior; the entire
trajectory/guidance family is biased.}
\label{fig:toy-all}
\end{figure}

\begin{figure}[h]
\centering
\includegraphics[width=0.9\linewidth]{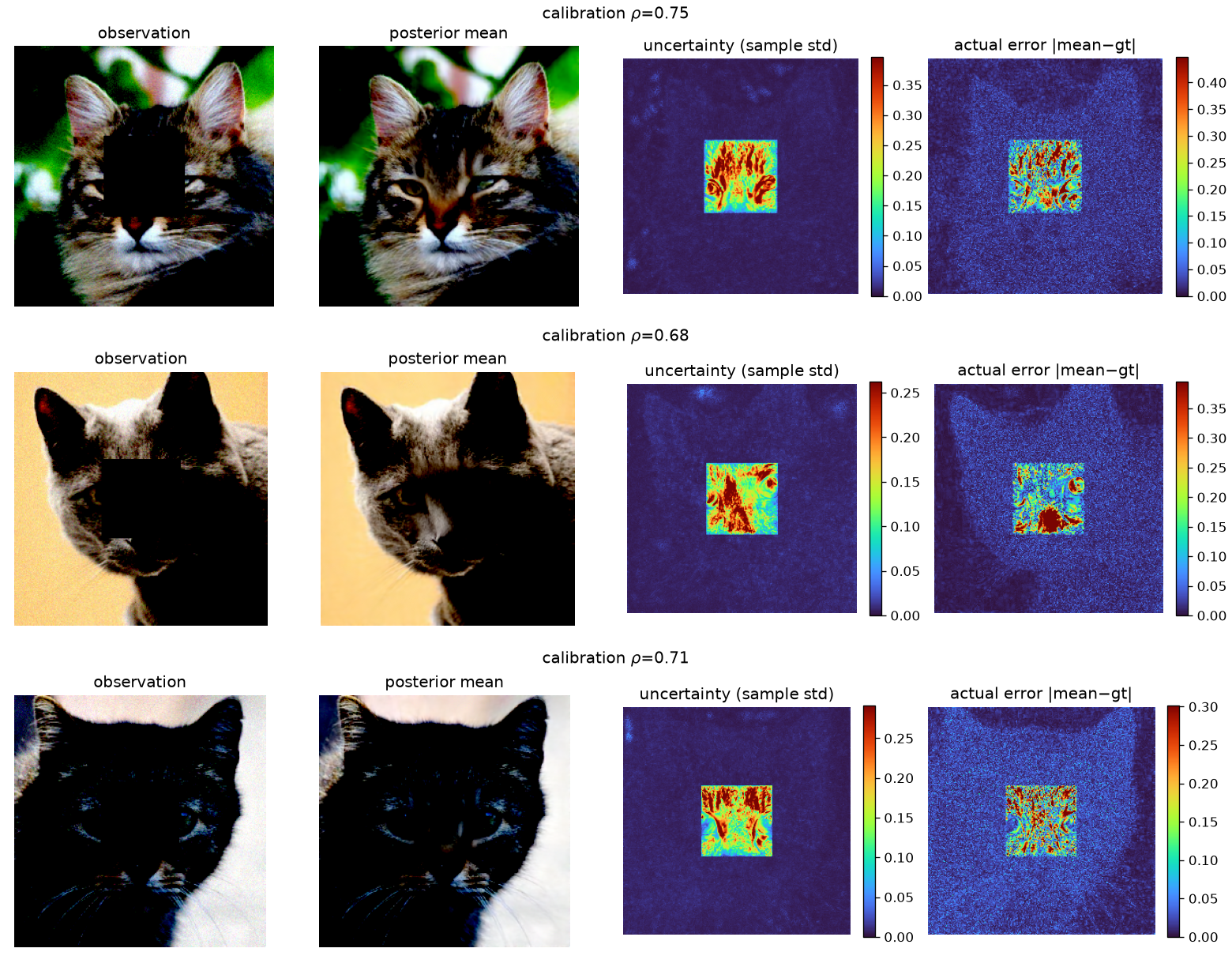}
\caption{\textbf{Additional uncertainty maps} (box inpainting), including darker /
harder cases. Per row: observation, posterior mean, per-pixel uncertainty
(across-sample std), actual error. The uncertainty localizes on the inpainted region
across a range of image appearances.}
\label{fig:heatmap-more}
\end{figure}

\begin{figure}[h]
\centering
\includegraphics[width=\linewidth]{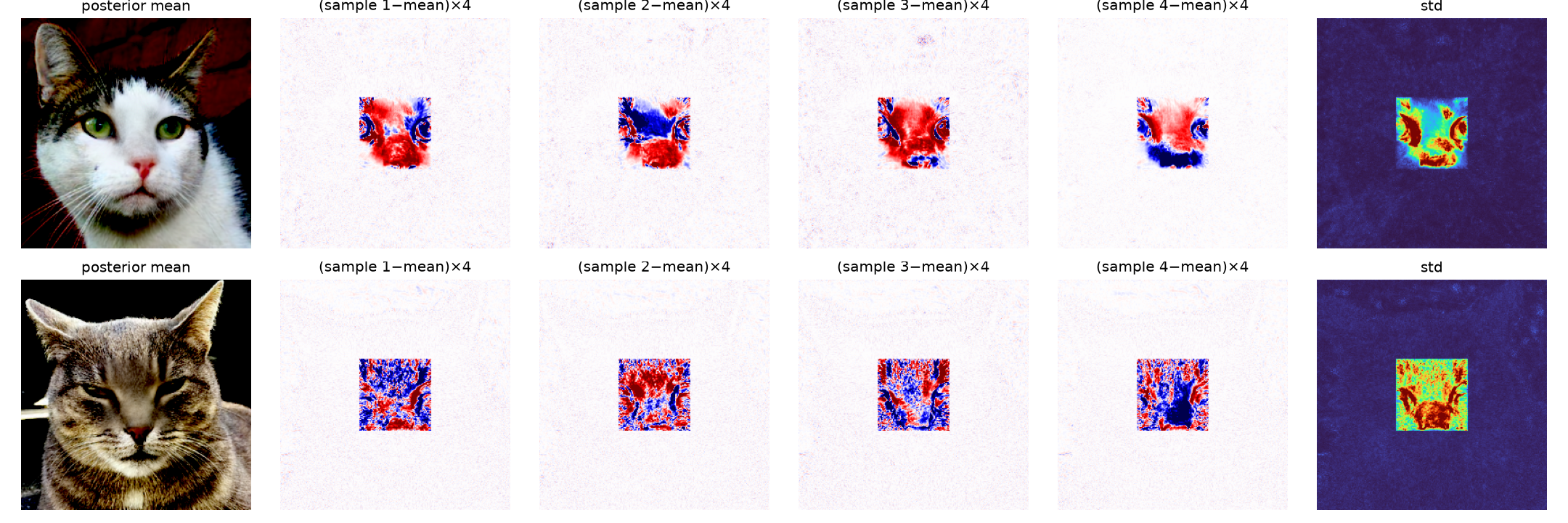}
\caption{\textbf{Posterior diversity, amplified} (box inpainting). Per row:
posterior mean; four per-sample deviations $(\text{sample}_i-\text{mean})\times4$
(red/blue), and the across-sample std. The deviations are concentrated in---and differ
across samples within---the inpainted region: the modest but genuine posterior
variation that single-shot solvers do not capture. (Raw $8$-sample montages are
visually near-identical precisely because the inter-sample variation is small in
magnitude; we therefore visualize it amplified.)}
\label{fig:montage}
\end{figure}

Across all four restoration tasks the uncertainty pattern adapts to the operator and
tracks the error, with $3$ examples each below (columns: observation, posterior mean,
per-pixel uncertainty (sample std), actual error; per-image $\rho$ in titles).

\begin{figure}[h]
\centering
\includegraphics[width=0.86\linewidth]{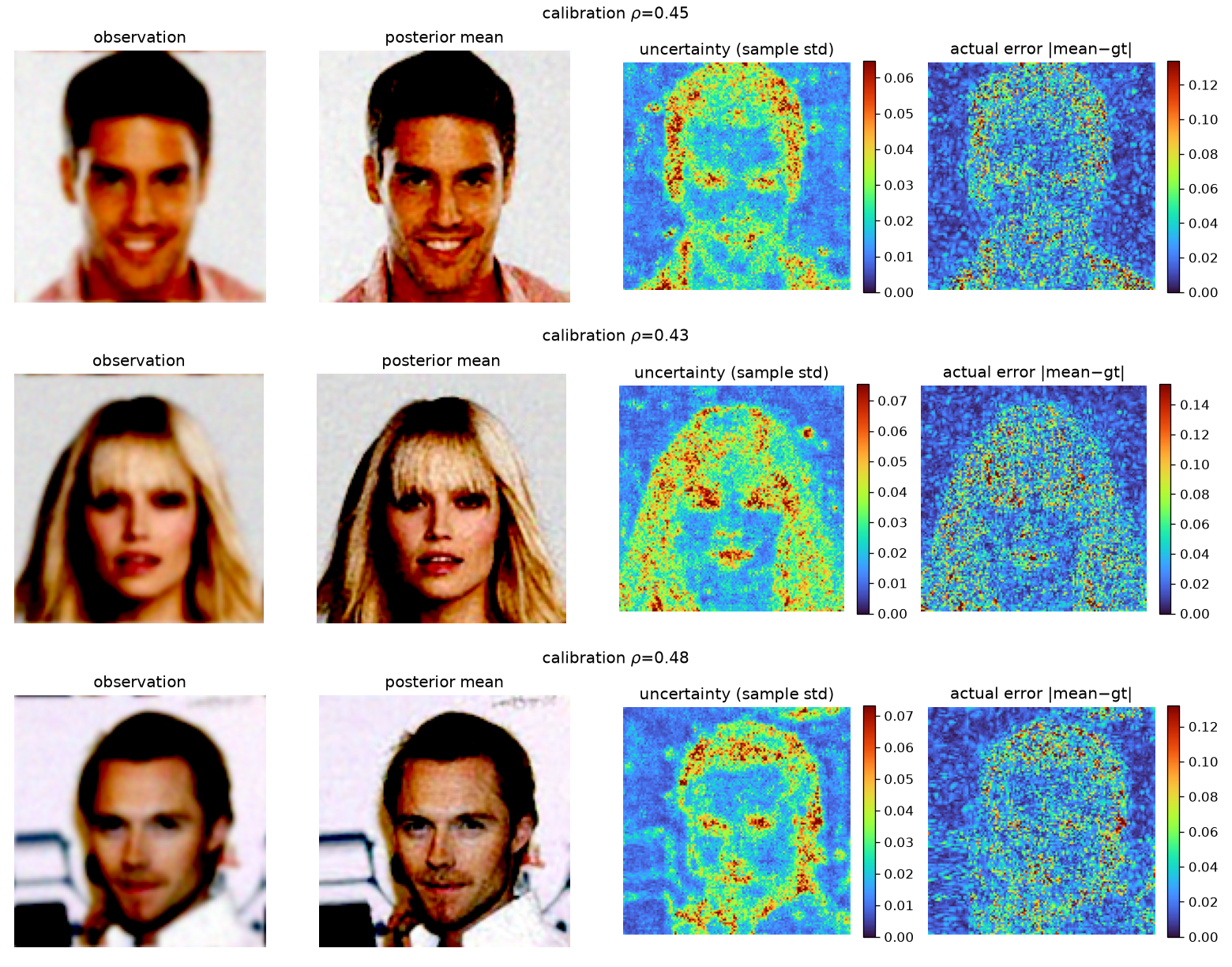}
\caption{\textbf{Gaussian deblurring} (CelebA, $3$ examples). Uncertainty
concentrates on edges and fine facial structure---the high-frequency content the
deblurring must restore.}
\label{fig:unc-deblur}
\end{figure}

\begin{figure}[h]
\centering
\includegraphics[width=0.86\linewidth]{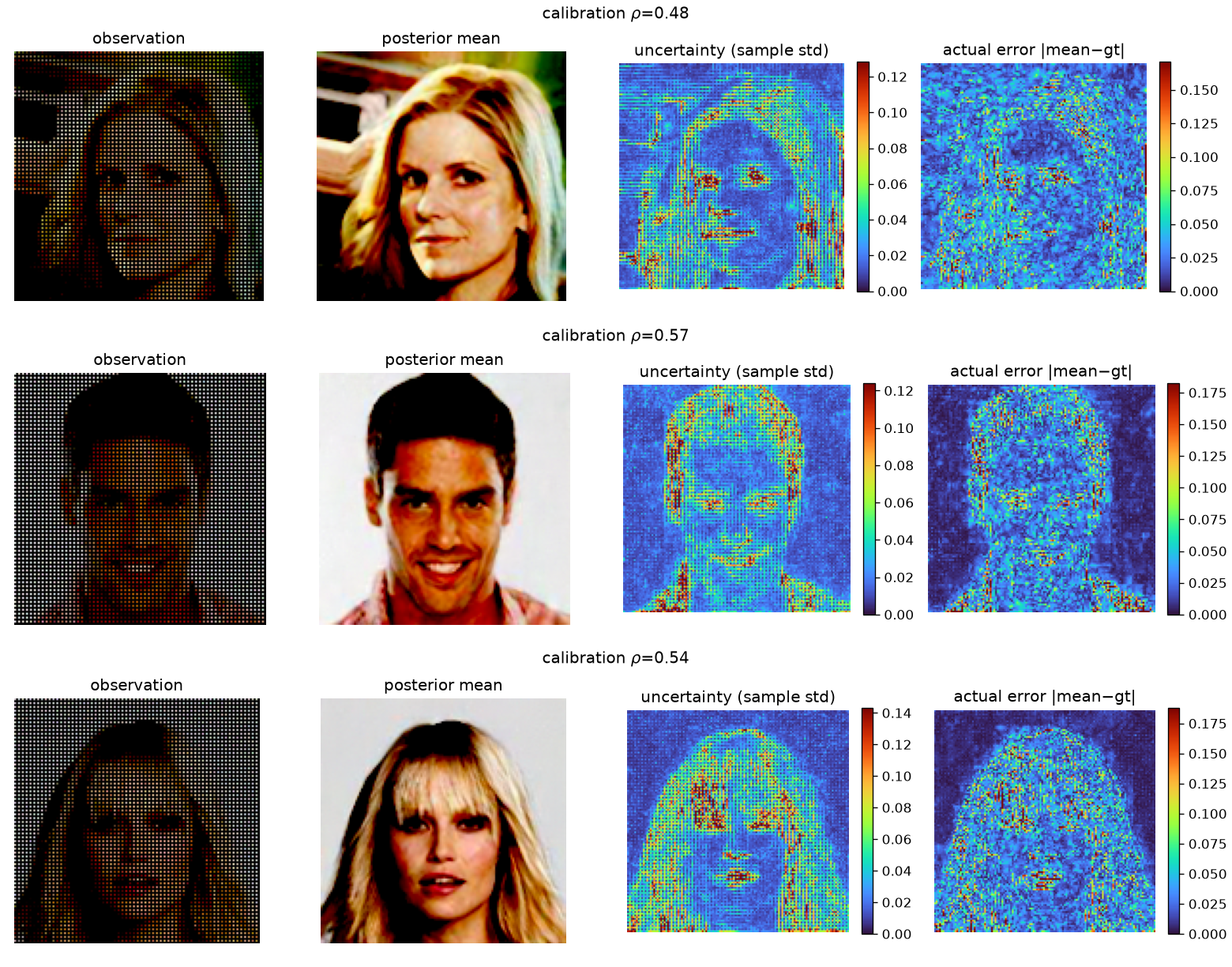}
\caption{\textbf{$\times4$ super-resolution} (CelebA, $3$ examples). Uncertainty
spreads over the high-frequency detail that super-resolution hallucinates.}
\label{fig:unc-sr}
\end{figure}

\begin{figure}[h]
\centering
\includegraphics[width=0.86\linewidth]{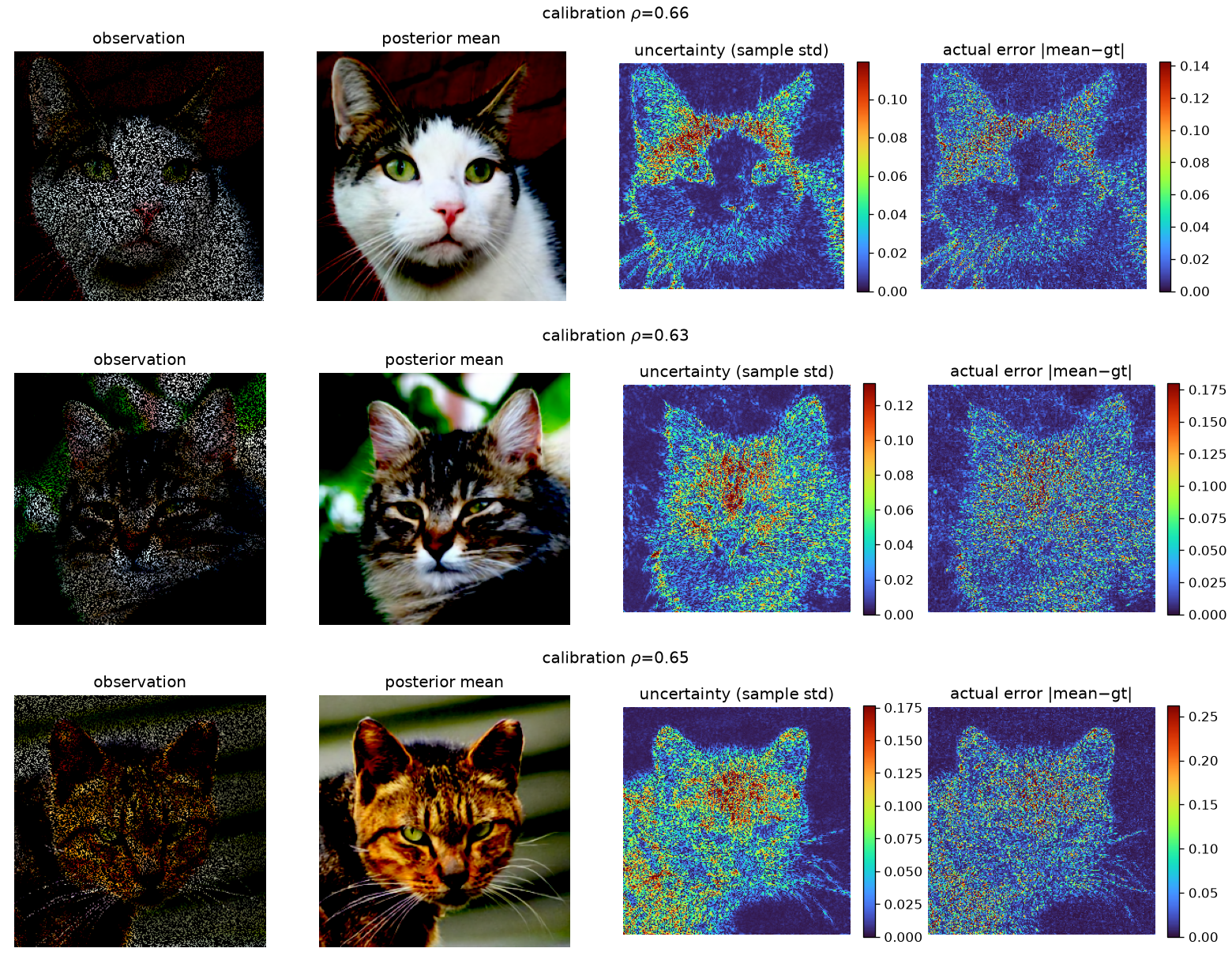}
\caption{\textbf{Random inpainting} (AFHQ-cat, $3$ examples). Uncertainty localizes
on the missing pixels.}
\label{fig:unc-randinp}
\end{figure}

\section{Method and experimental details}
\label{app:method}

\subsection{Velocity-correction solver}

Guided by Proposition~\ref{thm:approx}, our solver integrates the prior ODE from a
random source draw and applies, at each step, a mobility-modulated and
\emph{per-image normalized} measurement correction aligned with the instantaneous
likelihood score \eqref{eq:score} (using the Tweedie endpoint $\hat\x_1$).
Normalization is essential for stability: the raw gradient of
$\|H\hat\x_1-\y\|^2/(2\sigma^2)$ has magnitude $O(1/\sigma^2)$ and, applied without
normalization, diverges. Algorithm~\ref{alg:vc} summarizes one posterior sample;
$S$ independent draws give a posterior mean (reconstruction) and a per-pixel
standard deviation (uncertainty map).

\begin{algorithm}[h]
\caption{Posterior-flow velocity correction (one sample)}
\label{alg:vc}
\begin{algorithmic}[1]
\STATE \textbf{input:} measurement $\y$, operator $H$, noise $\sigma$, steps $N$,
guidance $\gamma$, mobility $\{w_t\}$
\STATE $\x\sim\mathcal N(0,I)$;\quad $\delta\leftarrow 1/N$
\FOR{$i=0,\dots,N-1$}
  \STATE $t\leftarrow i\delta$;\quad $\vv\leftarrow\vv_t(\x)$;\quad
  $\hat\x_1\leftarrow\x+(1-t)\vv$ \hfill\COMMENT{flow Tweedie endpoint}
  \STATE $\g\leftarrow\nabla_\x\big[\tfrac12\|H\hat\x_1-\y\|^2\big]$;\quad
  $\hat\g\leftarrow \g/(\|\g\|+\epsilon)$ \hfill\COMMENT{per-image normalized}
  \STATE $\x\leftarrow \x+\delta\,\vv-\gamma\,w_t\,\hat\g$
\ENDFOR
\STATE \textbf{return} $\x$
\end{algorithmic}
\end{algorithm}

\paragraph{Mobility schedules.} We use either a constant mobility $w_t\equiv1$
(\textsc{const}) or the flow noise level $w_t=\sigma_{r,t}^2$ with
$\sigma_{r,t}=(1-t)/\sqrt{t^2+(1-t)^2}$ (\textsc{sigma\_r}). The latter vanishes as
$t\to1$, removing late-stage data correction; \textsc{const} retains correction
throughout and is our default. The correction requires one forward and one backward
through the endpoint per step---\emph{no} backpropagation through the ODE, in
contrast to source-space optimizers such as D-Flow.

\paragraph{Hyperparameters.} Unless stated, $N=100$ Euler steps, $\gamma=1$,
mobility \textsc{const}; uncertainty experiments use $S=8$ samples. All AFHQ runs
use the public pretrained OT flow-matching prior and the FLOWER benchmark
degradations at $\sigma=0.05$; we report PSNR and LPIPS (AlexNet). Baselines use
their released, per-problem-tuned hyperparameters.

\subsection{Closed-form $2$D study}
\label{app:toy}

\paragraph{Prior and exact velocity.} The target is a Gaussian mixture
$p_1=\sum_{k=1}^K w_k\,\mathcal N(\mu_k,\sigma_p^2 I)$ ($K=8$, random $\mu_k$ in
$[-5,5]^2$, random $w_k$, $\sigma_p=0.35$) and the source is $p_0=\mathcal N(0,I)$.
For the straight (rectified-flow) conditional path $\x_t=(1-t)\x_0+t\x_1$ with
independent $\x_0\sim p_0,\x_1\sim p_1$, the marginal velocity
$\vv_t(\x)=\E[\x_1-\x_0\mid\x_t=\x]$ is available in closed form. Conditioned on
component $k$, $\x_t\mid k\sim\mathcal N(t\mu_k,s_t^2 I)$ with
$s_t^2=(1-t)^2+t^2\sigma_p^2$, and the joint Gaussianity gives
\begin{equation}
\E[\x_1\mid\x_t,k]=\mu_k+\tfrac{t\sigma_p^2}{s_t^2}(\x_t-t\mu_k),\qquad
\E[\x_0\mid\x_t,k]=\tfrac{1-t}{s_t^2}(\x_t-t\mu_k),
\end{equation}
so that $\vv_t(\x)=\sum_k r_k(\x)\,\big(\E[\x_1\mid\x,k]-\E[\x_0\mid\x,k]\big)$ with
responsibilities $r_k(\x)\propto w_k\,\mathcal N(\x;t\mu_k,s_t^2 I)$. This makes the
flow map essentially exact (no learned model), isolating the posterior-sampling
question from approximation error.

\paragraph{Measurement and exact posterior.} With a linear measurement
$\y=A\x+\boldsymbol\eta$, $\boldsymbol\eta\sim\mathcal N(0,\sigma^2 I)$, the
posterior is again a Gaussian mixture: each component $k$ has
$\Sigma_{\mathrm{post}}=(\sigma_p^{-2}I+A^\top\!\sigma^{-2}A)^{-1}$, mean
$\Sigma_{\mathrm{post}}(\sigma_p^{-2}\mu_k+A^\top\sigma^{-2}\y)$, and updated weight
$\propto w_k\,\mathcal N(\y;A\mu_k,A\sigma_p^2A^\top+\sigma^2 I)$. We draw from this
closed-form posterior for the ground-truth comparison.

\paragraph{Methods and metrics.} \emph{Source reweighting} integrates the
unmodified ODE from $20$k draws of $p_0$ and weights by $p(\y\mid\Phi(\z))$
(Lemma~\ref{thm:exact}); we report effective sample size and use weighted metrics
to remove resampling variance. \emph{FlowDPS-style guidance} adds
$g\cdot\nabla_{\x_t}\log p(\y\mid\hat\x_1)$ to the velocity, swept over $g$. Metrics
are sliced-$W_2$ ($1000$ projections), energy distance, multi-scale RBF MMD$^2$,
and the $\ell_1$ error of mode weights (nearest-component assignment), each against
a null floor given by two independent posterior draws.

\section{Relation to existing frameworks}
\label{app:relation}

Our results sit at the intersection of several literatures. Below we treat each in
turn, in every case separating what we \emph{borrow} (established machinery) from
what is \emph{new} here (its specialization to deterministic flow-matching posteriors
and the consequences for guidance-based solvers). A one-line summary: the
source-reweighting identity and the optimal-transport/Schr\"odinger machinery are
classical; their synthesis into a \emph{deterministic-flow posterior-transport
account}---with the zero-drift dichotomy, the guidance-as-approximate-corrector
reading, and the large-deviation explanation of mode collapse---is, to our knowledge,
new.

\paragraph{Normalizing-flow and latent-space Bayesian inference.}
An invertible generator $\x=\Phi(\z)$ turns a data posterior into a reweighted source
posterior $p_0^\y\propto p(\y\mid\Phi(\z))p_0(\z)$ (Lemma~\ref{thm:exact}). This is
the change-of-variables identity at the heart of variational inference with
normalizing flows \citep{rezende2015variational,papamakarios2021normalizing} and of
latent-variable Bayesian inference more broadly; encoder/decoder posterior inference,
flow-based VI, and ``inference amortization'' all exploit it. \emph{Borrowed:} the
identity itself. \emph{New:} its \emph{dynamical} consequence for a flow-matching
prior (Propositions~\ref{prop:invariant}--\ref{prop:dichotomy}). The reweighting
$r_t(\x)=p(\y\mid\Phi_{t\to1}(\x))/p(\y)$ is a first integral of the prior flow
($\tfrac{D}{Dt}r_t=0$), so the posterior marginal path is carried by the
\emph{unmodified} velocity field and conditioning requires no drift. Normalizing-flow
posterior inference operates on the static map $\Phi$; it does not analyze the
time-marginal dynamics that flow-matching inverse solvers actually run, which is
exactly where guidance methods inject their (drift) corrections.

\paragraph{Source-side posterior sampling: importance sampling, MCMC, SMC.}
Because conditioning lives in the source, drawing $p_0^\y$ is a latent-space inference
problem, and every generic posterior sampler applies. \emph{(i) Importance sampling:}
reweighting unconditional draws $\z\sim p_0$ by $p(\y\mid\Phi(\z))$ is unbiased and is
exactly our toy estimator; Proposition~\ref{prop:ldp} explains, via the rate function
$I_\y$, why its effective sample size decays exponentially in dimension and
inverse-noise. \emph{(ii) Latent MCMC:} Langevin / HMC on $\log p_0^\y(\z)=
\log p_0(\z)+\log p(\y\mid\Phi(\z))$ needs $\nabla_\z\log p(\y\mid\Phi(\z))=
(\nabla_\z\Phi)^\top\nabla\log p(\y\mid\cdot)$, i.e.\ differentiation through the
flow---the same backpropagation-through-the-ODE cost that makes D-Flow expensive
(Table~\ref{tab:capab}); D-Flow itself is the MAP/optimization limit of this chain and
recent work adds latent Langevin (``D-Flow-SGLD''). \emph{(iii) SMC:} particle filters
that anneal the likelihood along the trajectory give asymptotically exact diffusion
posterior samples \citep{wu2023practical,cardoso2024monte} at the price of many
particles and resampling. \emph{New / our position:} we do not propose another
source-side sampler; our velocity-correction solver is a cheap \emph{amortized
forward} surrogate (one ODE pass, no backprop) whose deviation from the exact
source-side answer is precisely the corrector gap of
Proposition~\ref{thm:approx}/Theorem~\ref{thm:bound}. It is complementary to MCMC/SMC:
it can warm-start them, and they can debias it.

\paragraph{Conditional and dynamic optimal transport.}
When one insists on starting from the unconditional $p_0$, the minimum-energy way to
reach $p(\cdot\mid\y)$ is the canonical corrector $\uu^\star$ of \eqref{eq:ustar}, a
\emph{conditional} dynamic-OT problem relative to the reference drift $\vv_t$,
characterized by the Benamou--Brenier system \citep{benamou2000computational}. This
connects measurement guidance to a substantial body of work on transport-based
generation and conditioning---rectified/OT flow matching, and OT couplings between
arbitrary endpoints. \emph{Borrowed:} the Benamou--Brenier characterization and the
gradient-field form of $\uu^\star$. \emph{New:} the reading of FlowDPS/FLOWER/PnP-Flow
as \emph{greedy, locally-linearized} approximations of this conditional-OT corrector
(Tweedie endpoint $+$ mobility), and the resulting Wasserstein bias bound. The lens
matters: ``likelihood gradient'' suggests the only error is a step size, whereas
``approximate conditional transport'' makes clear the error is a mis-specified
\emph{field}, which Section~\ref{sec:toy} confirms cannot be tuned away.

\paragraph{Schr\"odinger bridges, the F\"ollmer drift, and the small-noise limit.}
The stochastic analogue of $\uu^\star$ is the entropic Schr\"odinger bridge / F\"ollmer
drift that steers a reference diffusion to a prescribed terminal law
\citep{de2021diffusion,zhang2022path,tzen2019theoretical}; conditional/guided
diffusion is the special case where the terminal constraint is a likelihood and the
correction is the Doob $h$-transform drift $a_t\nabla\log h_t$ \citep{denker2024deft}.
Proposition~\ref{prop:dichotomy} places the deterministic flow at the
\emph{boundary} of this family: as the bridge noise $\epsilon\to0$, the drift
$\epsilon^2\nabla\log h_t^\epsilon\to0$ and the entropic bridge $\Gamma$-converges to
the deterministic OT map acting on a reweighted source. Hence diffusion guidance
(drift-based) and flow conditioning (source-based) are two regimes of one continuum,
and---this is the practical upshot---an $h$-transform drift designed for $\epsilon>0$
has no nonzero limit to ``port'' onto a flow ($\epsilon=0$); the information has
migrated into the initial law. This is, to our knowledge, the first explicit account
of where deterministic flow conditioning sits relative to the
Schr\"odinger/F\"ollmer picture.

\paragraph{Large deviations and the geometry of mode collapse.}
Our use of the Laplace principle (Proposition~\ref{prop:ldp}) draws on standard
large-deviation theory \citep{dembo2009large,dupuis2011weak,kifer1988random}.
\emph{Borrowed:} Varadhan's lemma, the contraction principle, and Laplace's method.
\emph{New:} their application to the flow source posterior to (a) quantify the IS
effective-sample-size collapse via the rate $I_\y$, and (b) explain guidance mode
collapse as a failure to preserve the relative Laplace weights
$w_k\propto e^{-I_\y(\z_k^\star)/\epsilon}|\nabla^2 I_\y(\z_k^\star)|^{-1/2}$ of the
action basins---a structural, not a magnitude, error. The identification of $\uu^\star$
with the Freidlin--Wentzell minimum-action correction from $p_0$ to $p(\cdot\mid\y)$
ties the transport, bridge, and large-deviation pictures together. A closely related
small-noise action principle underlies the Onsager--Machlup posterior transport
recently developed for deep Gaussian processes \citep{xu2026onsager} and the
Onsager--Machlup action-minimization view of generative transition-path sampling
\citep{rajaaction}, where the target is characterized as the minimizer of an
Onsager--Machlup functional along transport paths; our flow-source action $I_\y$ is the
inverse-problem analogue of that construction, specialized to a deterministic
probability-flow prior.

\paragraph{Diffusion posterior samplers with guarantees, and flow inverse solvers.}
On the diffusion side, provably consistent posterior samplers exist
\citep{xu2024provably,feng2023score}, targeting SDE priors via SMC/annealing; they share
our goal (true posterior sampling) but not our object (a deterministic flow's
velocity-plus-correction decomposition and source-transport viewpoint). On the flow
side, FlowDPS \citep{kim2025flowdps}, FLOWER \citep{pourya2026flower}, PnP-Flow
\citep{martin2025pnp} and D-Flow \citep{ben2024d} are the methods we analyze;
our contribution is not to outrank them on reconstruction metrics but to explain what
their corrections approximate, when they must fail, and what a faithful---and
cheap---alternative looks like.

\end{document}